
\documentclass[]{interact}

\usepackage{epstopdf}
\usepackage[caption=false]{subfig}

\usepackage{color}
\usepackage[longnamesfirst,sort]{natbib}
\bibpunct[, ]{(}{)}{;}{a}{,}{,}


\theoremstyle{plain}
\newtheorem{theorem}{Theorem}[section]
\newtheorem{lemma}[theorem]{Lemma}

\newtheorem{proposition}[theorem]{Proposition}

\theoremstyle{definition}
\newtheorem{definition}[theorem]{Definition}

\theoremstyle{remark}
\newtheorem{remark}{Remark}

\newcommand{\FBR}{FB(MEL, \!RPL)}
\newcommand{\FB}{FB(MEL, \!\L)}

\newcommand{\red}[1]{{\color{red} #1}}
\newcommand{\blue}[1]{{\color{blue} #1}}

\begin{document}


\title{An elementary belief function logic}

\author{
\name{D. Dubois\textsuperscript{a}\thanks{CONTACT D. Dubois, H. Prade, Email:\{didier.dubois,henri.prade\}@irit.fr; L. Godo, Email: godo@iiia.csic.es} and Lluis Godo\textsuperscript{b} and Henri Prade\textsuperscript{a}}
\affil{\textsuperscript{a}IRIT-CNRS, Universit\'e Paul Sabatier, Toulouse, France; \textsuperscript{b}IIIA-CSIC, Bellaterra, Spain}
}

\maketitle

\begin{abstract}
Non-additive uncertainty theories, typically possibility theory, belief functions and imprecise probabilities share a common feature with modal logic: the duality properties between possibility and necessity measures, belief and plausibility functions as well as between upper and lower probabilities extend the duality between possibility and necessity modalities to the graded environment.  It has been shown that the all-or-nothing version of possibility theory can be exactly captured by a minimal epistemic logic (MEL) that uses a very small fragment of the KD modal logic, without resorting to relational semantics. Besides, the  case of belief functions has been studied independently, and a belief function logic has been obtained by extending the modal logic S5 to graded modalities using {\L}ukasiewicz logic, albeit using relational semantics. This paper shows that a simpler belief function logic can be devised by adding {\L}ukasiewicz logic on top of MEL. It allows for a more natural semantics in terms of Shafer basic probability assignments.
\end{abstract}

\begin{keywords}
Modal logic; MEL; {\L}ukasiewicz fuzzy logic;  probability logic; belief functions; possibilistic logic
\end{keywords}

\section{Introduction}\label{intro}
There are two distinct lines of research that aim at modeling belief and knowledge: modal logic and uncertainty theories. Modal logic extends classical logic by introducing knowledge or belief at the syntactic level using a specific symbol, often denoted by $\Box$, that prefixes logical propositions \citep{Hintikka}. To say that $\Box \varphi$ is true is equated to the claim that proposition $\varphi$ is known or believed. Such a proposition as $\Box \varphi$  is called epistemic. Note that in this approach belief and knowledge are all-or-nothing concepts, i.e., such things as degrees of belief, let alone knowledge, are {not} considered  {(up to a few exceptions, e.g.,  \citep{vanderHoek92}, where grades are generally based on a count of possible worlds rather than referring to a graded scale)}. Moreover, in this approach knowledge is understood as true belief (assuming $\Box \varphi \to \varphi$ is an axiom, called T).

 The second line of research, on the contrary, sees the notion of graded belief as central, and attaches degrees of belief to propositions representing subsets of possible states of affairs. In other words, if $\Omega$ is the set of states of affairs, belief is represented by a set function $g:2^\Omega\to [0, 1]$, where $g(A)$ represents the amount of confidence in a statement of the form $x\in A$, where $x$ is the ill-known state of affairs. Typically, $g(A)$ is a degree of probability. However in the last 50 years, other types of set functions have been used, such as possibility and necessity measures  \citep{Zad78a},  \citep{DP88a, DLP94, DP14}, belief functions \citep{Dempster},  \citep{Shafer},  \citep{Smets-1} or imprecise probabilities \citep{Walley}. The rise of uncertainty theories   different from and often more general than probability functions was basically motivated by the need for a sound representation of partial knowledge (including uncertainty due to sheer ignorance), that probability functions fail to capture. In this paper, we focus on Shafer belief functions, showing that they are natural graded extensions of the all-or-nothing kind of uncertainty captured by epistemic or doxastic modal logics, albeit with simpler semantics based on possibility theory. 
 
 Quite early there have been some scholars pointing out the similarity between Shafer belief functions and modal epistemic logics. The first work along this line seems to be due to  \citet{Ruspini1, Ruspini2}. He pointed out that the degree of belief $Bel(A)= \sum_{E \subseteq A} m(E)$, where $m$ is the underlying mass assigment function, a probability distribution over the set of subsets of $\Omega$, can be viewed as the probability of the support of a  proposition \red{$\varphi$} where $A= Mod(\varphi)$ is the set of models of $\varphi$. In Ruspini's terminology, the support set of $\varphi$ is the set of propositions that imply $\varphi$; a set $E$ such that $m(E) > 0$ (focal, in the sense of Shafer) is called an epistemic set by Ruspini. His work suggested that a belief function logic can be built over the epistemic logic S5.  \citet{Pearl,P88} noticed that the degree of belief can indeed be viewed as a probability of provability, viewing a non-empty subset $E$ such that $E\subseteq A$ as an argument proving $\varphi$ with probability $m(E)$.
 
 \citet{Smets-1} noticed that $Bel(A)$ is of the form $P(\Box \varphi)$, i.e.\ the probability of fully believing $\varphi$, and developed this point in  \citep{Smets-2}. {This line of thought has been also investigated by \citet{Provan89a,Provan89b,Provan90} where the combination of belief functions using Dempster's rule of combination corresponds to a combination of corresponding support clauses. Besides, an early investigation of the propagation of belief degrees in a Boolean logic setting can be found in  \citep{ChatalicDP87}. \citet{Saffiotti92} introduced a so-called belief function logic (BFL) where a first order sentence can be associated with a pair of grades respectively representing the beliefs that the sentence is true and that the sentence is false (hence the sum of these grades should be less than 1).  \citet{Ch16} proposes a propositional reading of evidence theory where the frame of discernment is related to a propositional language, which enables her to consider non-mutually exclusive alternatives in the frame.
{Building on the probability of provability view, \citet{BesKoh} have generalized the concept of belief function, constructing an evidence theory on top of very general logics defined by consequence relations in the sense of Tarski.} {However, as negation need not be defined in these general logics, the usual duality relations between belief  and plausibility functions of Dempster-Shafer theory do not hold in general.}
\citet{Sossai99,Sossai00} proposed a logic for belief functions (also called BFL!) that is a proper extension of classical logic, the axiomatization of which includes both probabilistic and possibilistic semantics as special cases. In this logic, one can represent for any Boolean event, the ``meta-event'' stating that its belief is at least $\alpha$; we can also encode an unnormalized  version of Dempster rule of combination, where the contradiction can receive a positive mass. 
 This logic has been applied convincingly to robotics  \citep{SossaiBCT99,SossaiBC01}.} However, the above-mentioned formalisms have limited expressive power, because they do not account for the modal feature of belief functions in their syntax. In constrast, \citet{GHE01,GHE03} defined a logic for belief functions as a probabilistic logic expressed over a modal logic, namely, the whole modal logic S5. Finally let us mention, for the sake of completeness, quite a different approach to reasoning about belief functions (and other quantitative representations of uncertainty) developed in  \citet{Halpern03}'s book. It is based on a rich language built over linear combinations of likelihood terms that can encode the characteristic property of total monotonicity of belief functions as a denumerable set of axioms.
 
 
 There are two issues when trying to unify the two traditions about belief representation (modalities and set functions). 
 
 First at the syntactic level, what is the most appropriate language for supporting a belief  function logic? Choosing the language of S5, objective and epistemic propositions cohabit and can be combined. However, using the set function approach, all propositions (events) are encapsulated by the set function since all of them are assigned degrees of confidence. So, to design a minimal modal counterpart of belief functions, only epistemic formulas should be used. Besides, complex propositions involving nested modalities belong to the language of S5. Even if they can be simplified using S5 axioms, there is no set function counterparts of propositions prefixed by nested modalities (unless we consider the case of 2nd order probabilities and generalize them accordingly, which leads to formal and conceptual difficulties). 
 
 Second, at the semantic level, do we need accessibility relations to model uncertainty? Most presentations of modal logics, including epistemic logic, equip them with Kripke semantics, that is, accessibility relations. Modal logics at large are indeed tailored for a logical description of relations and their compositions. In the case of the logic S5, they are equivalence relations. The relational semantics makes it possible to evaluate all formulas of the modal language on objective states of affairs. Typically, $\Box \varphi$ is true in state $w\in \Omega$ if $\varphi$ is true in all states $R(w)$ accessible from $w$ via the relation $R$ on interpretations (for S5,  $R(w)$ is the equivalence class of $w$). This has led epistemic logic specialists to claim that imprecise information is expressed by indiscernible states of affairs, which is not obvious to grasp.\footnote{See \cite{BDGP18} for a detailed critical discussion of this view.} In contrast, interpreting  the truth of $\Box \varphi$ as $Bel(Mod(\varphi)) = 1$, we can see that this is the case if $\varphi$ is true in all epistemic (focal) sets where $\varphi$ is true. It clearly suggests that in order to get a simpler belief function logic, we should replace accessibility relations by epistemic sets in the sense of Ruspini. 

The process of simplifying the modal language and semantics has been carried out in the setting of possibility theory with the logic MEL \citep{BD13}. The idea was\begin{itemize}
  \item to restrict the language to epistemic formulas of the form $\Box \varphi$, where $\varphi$ is propositional, and their combination using negation and conjunction; 
  \item to evaluate modal formulas on epistemic sets;
  \item to keep the axioms K, D and a form of necessitation axiom
\end{itemize}  
In such a logic,  we have that $\Box \varphi$ is true in the epistemic set $E$ if and only if $E\subseteq Mod(\varphi)$
if and only if $N_E(Mod(\varphi))= 1$, where $N_E$ is a necessity measure (a special case of belief function {when focal sets are nested}) induced when the epistemic state of the agent is $E$. Noticing that $Bel(A) = \sum_{E\in \Omega} N_E(A)m(E)$, it is clear that we can envisage to build a genuine belief function logic on MEL rather than the full-fledged S5, and evaluate modal formulas on mass assignments. Moreover, previous works on \L ukasiewicz logic make it possible to consider that even if $\varphi$ is a Boolean proposition, $\Box \varphi$ can be a many-valued one ranging on $[0, 1]$, assuming that the degree of belief in $\varphi$ is the degree of truth of $\Box \varphi$.

This is the program followed by this paper whose structure is as follows. Section \ref{sec-MEL} recalls the minimal epistemic logic MEL. Section \ref{Luk-logic} recalls {\L}ukasiewicz fuzzy logic. Section \ref{P-on-MEL} puts probabilities on top of MEL formulas and shows its connection to belief functions.  Section \ref{BF-log} casts the probabilistic version of MEL into \L ukasiewicz logic, thus providing a simpler belief function logic, with simpler syntax and clearer semantics not needing accessibility relations. Finally, Section \ref{truthcons} extends the language to truth-constants in order to reason with quantified beliefs. The conclusion discusses limitations of this logic and future possible developments.

\section{Background I: the logic MEL} \label{sec-MEL}

The usual truth values {\em true} (1) and {\em false} (0) assigned to propositions are of ontological nature (which means
that they are part of the definition of what we call \emph{proposition}), whereas assigning to a proposition a value whose meaning is expressed by the word \emph{unknown} 
sounds like having an epistemic nature: it reveals a knowledge state according to which the truth value of a proposition (in the usual Boolean sense) in a given situation is out of reach (for instance one cannot compute it, either by lack of computing power, or due to a sheer lack of information). It corresponds to an epistemic state for an agent that can neither assert the truth of a Boolean proposition nor its falsity. 

Admitting that the concept of ``unknown'' refers to a knowledge state rather than to an ontic truth value, we may start with Boolean logic where asserted formulas are interpreted as beliefs, and add to its syntax the capability of stating that we ignore the truth value (1 or 0) of propositions. To this end we need to make a syntactic difference between not knowing the truth of a statement in classical propositional logic (CPL) and knowing that this statement is false. The natural framework to achieve this purpose is modal logic, and in particular, the logic KD.  Nevertheless, only a very limited fragment of this language is needed. We do not need nested modalities, as long as we do not need to model introspection (`I believe that I believe..."), nor objective sentences (without modalities) since we only deal with beliefs. The logic MEL \citep{BD09,BD13} was defined for that purpose. 
 
 Given a standard propositional language $\cal L$, consider another propositional language ${\cal L}_\Box$ whose set of propositional variables is of the form ${\cal V}_\Box= \{\Box \varphi \mid \varphi \in {\cal L}\}$ to which the classical connectives $(\land, \lor, \neg, \to, \equiv)$ can be applied. It is endowed with a modality operator expressing certainty, that encapsulates formulas in $\cal L$. So, there is one propositional variable for each formula in $\cal L$. As usual, $\Diamond \varphi$ is short for $\neg \Box \neg \varphi$, and expresses the idea that $\varphi$ cannot be ruled out. {Formulas in ${\cal L}_\Box$ are clearly modal formulas of depth 1, denoted by $\Phi, \Psi, \dots$}. {In particular, the formula $\Diamond \varphi \land \Diamond \neg \varphi$ expresses that the truth-value of $\varphi$ is unknown. The language of MEL could be as well named $\Box$CPL, to highlight the fact that it only involves boxed propositional formulas.
 
MEL is a propositional logic on the language ${\cal L}_\Box$ with the following semantics. {Let $\Omega$ be the set of classical interpretations for the propositional language $\cal L$.
A model of a propositional formula $\varphi \in {\cal L}$ is an element of $\Omega$;  we will denote by $Mod(\varphi)\subseteq \Omega$ the set of models of $\varphi$.}
In contrast, models (or interpretations) for MEL correspond to consistent epistemic states, which are simply subsets $\emptyset \neq E \subseteq \Omega$. The truth-evaluation rules of formulas of ${\cal L}_\Box$ in a given epistemic model $E$ are defined as follows:

\begin{itemize}
\item $E \models \Box \varphi$ \; if \; $E \subseteq Mod(\varphi)$
\item $E \models \neg \Phi$ \; if \; $E \not \models \Phi$
\item $E \models \Phi \land \Psi$ \; if \; $E \models \Phi$ and $E \models \Psi$
\end{itemize}
The intuition is that if the epistemic state of an agent is $E$ and $E \subseteq Mod(\varphi)$, then this agent believes that $\varphi$ is true; {in other words, $\varphi$ is true in all situations the agent believe to be possible.} Note that contrary to what is usual in modal logic, modal formulas are not evaluated on 
particular interpretations of the { language} $\mathcal{L}$ because modal formulas in MEL do not refer to the actual world. 

The notion of logical consequence is defined as usual: Let $\Gamma $ be a set of ${\cal L}_\Box$-formulas and $\Phi$ be another such formula; then $\Gamma \models \Phi$ if, for every epistemic model $ E$, $E \models \Phi$ whenever $E \models \Psi$ for all $\Psi \in \Gamma$.

MEL is a logic that allows an agent to reason about another agent's beliefs or knowledge, not about one's own beliefs. If we admit an agent is aware of her knowledge, we can assume that when asked about the truth of a proposition $\varphi$ the agent can always say either that she believes it is true ($\Box \varphi$), false ($\Box \neg\varphi$) or does not know ($\Diamond \varphi \land \Diamond \neg \varphi$), which corresponds to a complete MEL base. The situation is different if one only wants to reason about what an agent knows of the beliefs of another agent. Then the former may partially ignore what the latter knows. A MEL base (any set of formulas in MEL) should be interpreted in this way. 

This point becomes clear when considering the formula $\Box \varphi \lor \Box \neg \varphi$, equivalent to $\neg(\Diamond \varphi \land \Diamond \neg \varphi)$, expressing the negation of ignorance. It says that the former agent knows that the latter agent knows the truth-value of $\varphi$, but the former agent does not know what it is.
 When neither $\Box \varphi$ nor $\Box \neg \varphi$ can be derived from a MEL base, such a formula makes no sense if the $\Box$ modality has an introspective flavor.\footnote{Namely, I cannot say ``I am not ignorant of the truth-value of $\varphi$, but I do not know which one." Besides, the usual modal introspection axioms 4: $\Box \varphi \to \Box\Box \varphi$ and 5: $\Diamond \varphi \to \Box\Diamond \varphi$ cannot be expressed in MEL.}

MEL can be axiomatized in a rather simple way, see  \citep{BD13}. The following {is} a possible set of axioms for MEL in the language of ${\cal L}_\Box$:

\begin{itemize}
\item[(CPL)] Axioms of CPL for ${\cal L}_\Box$-formulas
\item[(K)] $\Box(\varphi \to \psi) \to (\Box \varphi \to \Box \psi)$ 
\item[(D)] $\Box \varphi\to \Diamond \varphi$
\item[(Nec)] $\Box \varphi$, for each $\varphi \in {\cal L}$ that is a CPL tautology, i.e., if $Mod(\varphi) = \Omega$. 
\end{itemize}
The only inference rule is modus ponens. 
The corresponding notion of proof, denoted by $\vdash_{\rm MEL}$, is defined as usual from the above set of axioms and modus ponens. 

This set of axioms provides a sound and complete axiomatization of MEL, that is, it holds that, for any set of MEL formulas $\Gamma \cup \{\Phi\}$, $\Gamma \models \Phi$ iff $\Gamma \vdash_{\rm MEL} \Phi$. This is not surprising: MEL is just a standard propositional logic with additional axioms, whose propositional variables are the formulas of another propositional logic, and whose interpretations are subsets of interpretations of the latter. Namely, we have that $\Gamma \vdash_{\rm MEL} \Phi$ if and only if 
$\Gamma \cup Ax(\textrm{MEL}) \vdash_{\rm CPL}\Phi$, where $Ax(\textrm{MEL})$ is the set of all instances of the above axioms (CPL), (K), (D) and (Nec), and $\vdash_{\rm CPL}$ is the syntactic inference in standard propositional logic. All we have to do is to show that the additional axioms  ensure that a standard interpretation of the  ${\cal L}_\Box$ language corresponds indeed to a necessity measure based on a consistent epistemic state.

When the propositional language $\cal L$ is built over a finite set of propositional variables, for every interpretation $w \in \Omega$ there is a propositional formula whose only model is $w$. Indeed, if $V$
 is the finite set of propositional variables, then the maximal elementary conjunction (or min-term) $\sigma_w$, defined as
$$\sigma_w := (\bigwedge_{p \in V: w \models p} p ) \land (\bigwedge_{p\in V: w \not\models p} \neg p ), $$
is such that $Mod(\sigma_w) = \{w\}$. 
A similar result also holds for MEL. We have seen that models for MEL-formulas built from $V$ are subsets of $\Omega$, that is, $2^\Omega \setminus \{\emptyset\}$ is the set of MEL models. Then, for a given MEL-model $E \subseteq \Omega$, there is always a MEL-formula $\Sigma_E$ whose only model is $E$. Indeed, let $\varphi_E$ a propositional formula whose set of models is $E$ (for instance one can take $\varphi_E = \bigvee_{w \in E} \sigma_w$), and consider the MEL-formula
$$\Sigma_E := \Box \varphi_E \land \Delta \varphi_E,$$
where 
$$\Delta \varphi_E := \bigwedge_{w \in E}  \neg \Box(\varphi_E \land \neg \sigma_w)) .$$
Then one can check that $S \models_{MEL} \Sigma_E$ iff $S = E$. Namely: 
\begin{eqnarray*}
\{S \mid S \models \Box \varphi_E  \land (\bigwedge_{w \in E}  \neg \Box(\varphi_E \land \neg \sigma_w))\} &=& \\
\{S \mid S \subseteq E \} \cap  \bigcap_{w \in E} \{S \mid S \models  \neg \Box(\varphi_E \land \neg \sigma_w)\} &=& \\
\{S \mid S \subseteq E \} \cap  \bigcap_{w \in E} \{S \mid S \not\subseteq E \setminus \{w\} \} &=& \{E\}. 
\end{eqnarray*}
As a consequence, if $E$ and $E'$ are two different subsets of $\Omega$, then the formula $\Sigma_E \land \Sigma_{E'}$ has no models, or equivalently, by completeness of MEL,  the formula 
\begin{eqnarray}\label{incomp}
\neg(\Sigma_E \land \Sigma_{E'})
\end{eqnarray} 
is a theorem of MEL. Moreover,  the MEL-formula $\bigvee_{E \models \Phi} \Sigma_E$ has the same set of models as $\Phi$, hence, by completeness, MEL proves the equivalence
\begin{eqnarray} \label{equiv}
 \Phi \equiv \bigvee_{E \subseteq \Omega: E \models \Phi} \Sigma_E.
 \end{eqnarray}
 
\begin{remark} More generally, a MEL knowledge base or theory $\Gamma$ whose only model is $E$ is complete, that is, for each MEL-formula $\Psi$, either $\Psi$ or $\neg\Psi$ follows from $\Gamma$. As said earlier, intuitively, this base represents the knowledge of an agent 1 who knows (or believes (s)he knows) the epistemic state of another agent 2. That is, for every proposition $\varphi$ of the language $\mathcal L$, agent 1 knows whether for agent 2, $\varphi$ is true, false or unknown. When the MEL base of agent 1 is incomplete (i.e., when the MEL base has several models), there are propositions for which the latter cannot say whether for agent 2, such propositions are true, false or unknown. Note that for any proposition $\varphi$, an agent can always respond to the question: for you is $\varphi$ true, false or unknown? so an agent is aware of his epistemic state. But he may partially ignore the epistemic states of other agents, which is modeled by incomplete MEL knowledge bases.
\end{remark}

\section{Background II: {\L}ukasiewicz fuzzy logic } \label{Luk-logic}

{\L}ukasiewicz infinite-valued logic  \citep{L30} is one of the most prominent systems falling under the umbrella of Mathematical Fuzzy Logic, see the Handbooks  \citep{handbook1,handbook2,handbook3}. In fact, together with G\"odel  infinite-valued logic  \citep{Godel:ZumAussagen}, it was defined long before fuzzy logic as a discipline was born, and has received much attention since the fifties, when completeness results were proved in \citep{Rose-Rosser}, and via algebraic means by \citet{Chang:MVAlgebras,Chang2}. The latter developed the theory of MV-algebras, which is now widely studied in the literature.  For many details and results about {\L}ukasiewicz logic and MV-algebras the reader is referred to the monographs  \citep{CDM,MundiciAdvanced}. 
 
The language of {\L}ukasiewicz logic is built in the usual way
from a set of propositional variables,
one binary connective $\to_L$ (that is, {\L}ukasiewicz implication) and the truth constant $\bar{0}$, that we will also denote as $\bot$. A {\em valuation} $e$ maps every
propositional
variable to a real number from the unit interval $[0, 1]$ and extends to all
formulas
in the following way:\\

\begin{tabular}{r l l}
$e(\bar{0})$& = &0, \\
$e(\varphi \to_L \psi)$& = &$\min(1-e(\varphi)+e(\psi), 1)$.\\
\end{tabular}
\ \\ \\
Other interesting connectives can be defined from them. In Table \ref{connectives} one can find a list of the primitive and some derived connectives together with their definitions and associated truth-functions on the real unit interval $[0,1]$. 

\begin{table} 
\begin{center}
\begin{tabular}{l l l l}
Connective \mbox{} & Definition & Truth-function \\
\hline \vspace{-0.2cm}\\
Falsum: $\bar{0}$ & & 0, \\
Implication: $\to_L$ & & $\min(1, 1-x+y)$, \\
Truth: $\bar{1}$ & $\bar{1} := \varphi \to \varphi$ & 1,\\
Negation: $\neg_L$ & $\neg_L \varphi := \varphi \to_L \bar{0}$ & $1-x$,\\
Strong disjunction: $\oplus$ & $\varphi \oplus \psi := \neg \varphi \to_L \psi$ & $\min(1, x+y)$, \\
Strong conjunction: $\&$ & $\varphi \& \psi := \neg (\neg_L \varphi \oplus \neg_L \psi)$ & $\max(0, x+y-1)$,\\
Difference: $\ominus$ & $\varphi \ominus \psi := \varphi \;\&\; \neg_L \psi$ & $\max(0, x -y)$, \\
Equivalence: $\equiv$ & $\varphi \equiv \psi := (\varphi \to_L \psi) \& (\psi \to_L \varphi)$ & $1 - | x-y|$,\\
Weak conjunction: $\land$ & $\varphi \land \psi := \varphi \& (\varphi \to_L \psi)$ & $\min(x, y)$, \\
Weak disjunction: $\lor$ & $\varphi \lor \psi := \neg_L(\neg_L \varphi \land \neg_L \psi)$ & $\max(x, y)$
\end{tabular}
\end{center}
\caption{ Main connectives of {\L}ukasiewicz logic and their interpretations.}
 \label{connectives}
\end{table}

The truth-functions of {\L}ukasiewicz logic (primitive) connectives over the real unit interval $[0,1]$ define an algebra 
which is referred to as the {\em standard MV-algebra} and denoted by $[0, 1]_{MV}$. It generates, in universal algebraic terms, the equivalent algebraic semantics of {\L}ukasiewicz logic in the sense of Blok and Pigozzi, that is, the variety of MV-algebras  \citep{CDM,MundiciAdvanced}. 
A valuation $e$ is called a {\em model} of a set of formulas $T$ whenever $e(\varphi) = 1$ for each formula $\varphi \in T$. 

Axioms and rules of {\L}ukasiewicz Logic are the following, see e.g.\  \citep{CDM,H98}:  \\

\begin{tabular}{l l}

({\L1})& $\varphi \to_L (\psi \to_L \varphi)$\\

({\L}2)& $(\varphi \to_L \psi) \to_L ((\psi \to_L \chi) \to_L (\varphi \to_L \chi))$\\

({\L}3)& $(\neg \varphi \to_L \neg \psi) \to_L (\psi \to_L \varphi)$\\

({\L}4)& $((\varphi \to_L \psi) \to_L \psi) \to ((\psi \to_L \varphi) \to_L \varphi)$\\

(MP)& Modus ponens: from $\varphi$ and $\varphi \to_L \psi$ derive $\psi$\\ 
\end{tabular}
\ \\ 

From this axiomatic system, the notion of proof from a 
theory (a set
of formulas), denoted $\vdash_{\text \L}$, is defined as usual. 

The above axioms are tautologies: they are valid (i.e., they are evaluated to 1 by any valuation), and the rule of modus ponens preserves validity. Moreover, the following completeness result holds. 

\begin{theorem}\label{lukCompleteness}
The logic \L\ is complete for deductions from {\em finite} theories. That is, if $T$ is a finite theory, then $T \vdash_{\text \L} \varphi$ iff
$e(\varphi) = 1$ for each {\L}ukasiewicz valuation $e$ model of $T$.
\end{theorem}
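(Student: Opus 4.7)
The plan is to split the theorem into its soundness and completeness halves. Soundness is routine: inspecting the truth-functions in Table \ref{connectives}, one checks that each of the axioms (\L{}1)--(\L{}4) is evaluated to $1$ by every valuation $e$, and that modus ponens preserves the property of being evaluated to $1$. Hence by an easy induction on the length of proofs, whenever $T \vdash_{\text \L} \varphi$ and $e$ is a model of $T$, one has $e(\varphi)=1$.

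For completeness, I would go through the algebraic semantics. First I would build the Lindenbaum--Tarski construction: on the set of formulas consider the relation $\varphi \sim_T \psi$ defined by $T\vdash_{\text \L}\varphi \equiv \psi$, show that $\sim_T$ is a congruence with respect to all connectives, and that the quotient structure $\mathbf{A}_T$ is an MV-algebra in which the class $[\varphi]$ equals $\bar 1$ exactly when $T\vdash_{\text \L}\varphi$. This already gives algebraic completeness: $T\vdash_{\text \L}\varphi$ iff $v(\varphi)=1$ for every valuation $v$ into an MV-algebra that sends all of $T$ to $1$. The nontrivial reduction is then to pass from arbitrary MV-algebras to the standard one $[0,1]_{MV}$.

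To transfer to $[0,1]_{MV}$-valuations I would use Chang's subdirect representation theorem, which writes every MV-algebra as a subdirect product of MV-chains, together with the fact (provable either via Chang's original construction or via Mundici's equivalence with unital lattice-ordered abelian groups) that every MV-chain embeds into an ultrapower of $[0,1]_{MV}$. This is the step that carries the real weight of the proof, and it is the main obstacle: without these two results, the algebraic completeness of the previous paragraph does not yet give $[0,1]_{MV}$-completeness. Once they are available, an $\mathbf{A}_T$-countermodel for $\varphi$ yields a countermodel in an ultrapower of $[0,1]_{MV}$.

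Finally I would use the hypothesis that $T$ is \emph{finite} to bring the countermodel back into $[0,1]_{MV}$ itself. Let $\chi$ be the strong conjunction of the formulas in $T$. By the local deduction theorem of \L ukasiewicz logic, $T\vdash_{\text \L}\varphi$ is equivalent to $\vdash_{\text \L}\chi^{n}\to_L \varphi$ for some $n\in\mathbb{N}$, where $\chi^n=\chi \,\&\cdots\&\, \chi$ ($n$ times). Thus if $T\not\vdash_{\text \L}\varphi$, then for every $n$ the formula $\chi^{n}\to_L\varphi$ fails to be a theorem, hence by the ultrapower argument above there is, for each $n$, a valuation into $[0,1]_{MV}$ refuting it. A standard compactness/\L{}os-style argument, or equivalently the fact that the single negated formula $\neg_L(\chi^n\to_L\varphi)$ is satisfiable in $[0,1]_{MV}$ for an appropriate $n$, then produces a valuation $e\colon \mathit{Var}\to[0,1]$ with $e(\psi)=1$ for all $\psi\in T$ and $e(\varphi)<1$, contradicting the hypothesis. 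This closes the ``if'' direction and completes the proof.
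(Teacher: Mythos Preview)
The paper does not give a proof of this theorem at all: it is stated as a background result, with the references to \citet{Rose-Rosser}, \citet{Chang:MVAlgebras,Chang2}, \citet{CDM} and \citet{H98} serving in lieu of a proof. So there is no ``paper's proof'' to compare your proposal to; you are effectively sketching the classical argument from those sources.

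Your outline is indeed the standard route (Lindenbaum--Tarski MV-algebra, Chang's subdirect representation, embedding of MV-chains into ultrapowers of $[0,1]_{MV}$, and the local deduction theorem to handle finite $T$), and the first three paragraphs are fine as a plan. The final paragraph, however, has a real gap. From $T\not\vdash_{\text{\L}}\varphi$ and the local deduction theorem you correctly obtain, for every $n$, a valuation $e_n$ in $[0,1]$ with $e_n(\chi^n)>e_n(\varphi)$; in particular $e_n(\chi)>1-\tfrac{1}{n}$. But a ``compactness/\L o\'s-style'' limit of the $e_n$ only gives you $e(\chi)=1$; nothing prevents $e_n(\varphi)\to 1$ as well, so the limit need not refute $\varphi$. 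Equivalently, if you stay in an ultrapower $^*[0,1]$ the countermodel may assign $\varphi$ a value infinitesimally below $1$, and the standard-part map then sends it to $1$. This is exactly the phenomenon behind the failure of standard completeness for infinite theories noted in the Remark right after the theorem.

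What is actually needed to close the argument is an analytic lemma specific to {\L}ukasiewicz term functions: if $f,g\colon[0,1]^k\to[0,1]$ are McNaughton (continuous, piecewise linear with integer coefficients) and $f^{-1}(1)\subseteq g^{-1}(1)$, then there exists $n$ with $\max(0,nf-(n-1))\leq g$ pointwise. Applied to $f=e\mapsto e(\chi)$ and $g=e\mapsto e(\varphi)$, this yields $\vdash_{\text{\L}}\chi^n\to_L\varphi$ for some $n$, hence $T\vdash_{\text{\L}}\varphi$, giving the contrapositive you want. This lemma (proved e.g.\ in \citet{H98}) is the genuine extra ingredient linking theoremhood completeness to finite strong completeness; your sketch should invoke it explicitly rather than a generic compactness claim.
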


\begin{remark}
This completeness result with respect to the standard semantics on $[0, 1]_{MV}$ is not valid for deductions from general (non-finite) theories. If one wants to enforce such  a stronger completeness result then one has to either to add to {\L}ukasiewicz logic the following infinitary rule of inference  \citep{Montagna}:
$$(IR) \; \frac{\varphi \to_L \psi^n, \mbox{ for each } n \in \mathbb{N}}{\neg \varphi \lor \psi}$$

\noindent where $\psi^n$ is a shorthand for $\psi \& \stackrel{n}{\ldots} \& \psi$, or to replace the standard real chain $[0, 1]_{MV}$ by an MV-chain on a hyperreal unit interval $[0, 1]^{*}$ as the domain of truth-values and hence allowing for infinitesimal and co-infinitesimal values  \citep{F08,apal}.  
\end{remark}

Observe that {\L}ukasiewicz logic does not satisfy the deduction theorem in full generality, it only satisfies the following {\em local} form: $T,\varphi\vdash_{\text \L} \psi$ iff there exists $ n \in \mathbb{N}$ such that $T\vdash_{\text \L} \varphi^n\to_L \psi$. There, $n$ depends on the formula $\varphi$.

\section{Probabilities on MEL formulas and belief functions}\label{P-on-MEL}

We first introduce the notion of probability function on MEL-formulas and then we will see that they are intimately related to belief functions on propositional formulas. 
\subsection{Probabilities of MEL formulas}

\begin{definition} A probability function on the language of MEL-formulas ${\cal  L}_{\Box}$ 
is a mapping $\mu: {\cal  L}_{\Box} \to [0, 1]$ such that:
\begin{itemize}
\item[$(\Pi 1)$] $\mu(\neg \Phi) = 1 - \mu(\Phi)$

\item[$(\Pi 2)$] $\mu(\Phi \lor \Psi) = \mu(\Phi) + \mu(\Psi) - \mu(\Phi \land \Psi)$

\item[$(\Pi 3)$] $\mu(\Phi) = 1$, if $\vdash_{\rm MEL} \Phi$ 
\end{itemize}
\end{definition}

Note that, by $(\Pi 1)$ and $(\Pi 3)$ above, any probability $\mu$ on MEL-formulas satisfies that $\mu(\neg \Phi) = 0$ whenever $\vdash_{\rm MEL} \Phi$. Also, if $\Phi \to \Psi$ is a theorem of MEL, then $\mu(\neg \Phi \lor \Psi) = 1$, and by additivity $(\Pi 2)$, $1 = \mu(\neg\Phi) + \mu(\Psi) - \mu(\neg \Phi \land \Psi) \leq \mu(\neg\Phi) + \mu(\Psi) = 1-\mu(\Phi) + \mu(\Psi)$, and hence $\mu(\Phi) \leq \mu(\Psi)$. As a consequence, we have that probabilities on formulas respect logical equivalence. 

\begin{lemma} \label{preserv} Let $\mu: {\cal  L}_{MEL} \to [0, 1]$ be a probability on MEL-formulas. Then: 

(i) If $ \vdash_{\rm MEL}\Phi \to \Psi$, then $\mu(\Phi) \leq \mu(\Psi)$.

(ii)  If $\vdash_{\rm MEL} \Phi \equiv \Psi$, then $\mu(\Phi) = \mu(\Psi)$. 
\end{lemma}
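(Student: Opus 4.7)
The plan is to prove (i) directly by unpacking the three axioms $(\Pi 1)$--$(\Pi 3)$ together with the fact that MEL extends CPL on $\mathcal{L}_\Box$-formulas, and then to obtain (ii) as a trivial consequence of (i) applied in both directions. The computation is essentially the one already sketched in the paragraph preceding the lemma; my job is just to organize it cleanly and flag the one non-axiomatic input, namely non-negativity of $\mu$.

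For (i), I would start from the hypothesis $\vdash_{\rm MEL}\Phi\to\Psi$. Since MEL contains all CPL-tautologies on $\mathcal{L}_\Box$-formulas, the equivalence $(\Phi\to\Psi)\equiv(\neg\Phi\lor\Psi)$ is an MEL-theorem, so $\vdash_{\rm MEL}\neg\Phi\lor\Psi$. By axiom $(\Pi 3)$ this yields $\mu(\neg\Phi\lor\Psi)=1$. Now apply the additivity axiom $(\Pi 2)$ to the pair $\neg\Phi,\Psi$ and the duality axiom $(\Pi 1)$:
$$1 \;=\; \mu(\neg\Phi\lor\Psi) \;=\; \mu(\neg\Phi)+\mu(\Psi)-\mu(\neg\Phi\land\Psi) \;=\; (1-\mu(\Phi))+\mu(\Psi)-\mu(\neg\Phi\land\Psi).$$
Since $\mu$ takes values in $[0,1]$ by definition, $\mu(\neg\Phi\land\Psi)\geq 0$, and dropping this non-negative term gives the inequality $1\leq (1-\mu(\Phi))+\mu(\Psi)$, i.e.\ $\mu(\Phi)\leq\mu(\Psi)$.

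For (ii), if $\vdash_{\rm MEL}\Phi\equiv\Psi$ then, again invoking that MEL contains CPL, both $\vdash_{\rm MEL}\Phi\to\Psi$ and $\vdash_{\rm MEL}\Psi\to\Phi$ hold, and applying (i) twice yields $\mu(\Phi)\leq\mu(\Psi)\leq\mu(\Phi)$. There is no real obstacle here: the whole proof is a one-line manipulation of the three defining conditions. The only subtlety worth making explicit is that the argument uses the non-negativity of $\mu$ (built into the codomain $[0,1]$) to discard $\mu(\neg\Phi\land\Psi)$, and that it uses the fact that MEL is a genuine extension of CPL on its language, so classical propositional equivalences may be invoked freely when manipulating the $\mathcal{L}_\Box$-formulas appearing inside $\mu$.
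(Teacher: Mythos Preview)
Your proof is correct and follows exactly the argument the paper gives in the paragraph immediately preceding the lemma: apply $(\Pi 3)$ to get $\mu(\neg\Phi\lor\Psi)=1$, expand via $(\Pi 2)$ and $(\Pi 1)$, drop the non-negative term $\mu(\neg\Phi\land\Psi)$, and derive (ii) from (i) applied twice. Your explicit flagging of the role of non-negativity and of MEL extending CPL is a helpful clarification but does not depart from the paper's approach.
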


Actually, probabilities on MEL-formulas are in one-to-one correspondence with probability distributions on MEL-models. {As the latter are non-empty sets, such probabilities define random sets on $\Omega$.}
%
%

Indeed, recall that $\Omega$ denotes the set of classical interpretations for the language $\cal L$, and let {$P: 2^{2^\Omega } \to [0, 1]$} be a probability measure on the power set of $\Omega$ {such that $P(\{\emptyset\}) =0$}, and define 
\begin{eqnarray} \label{eq1}
\mu_P(\Phi) = P(\{E \subseteq \Omega \mid E \models \Phi\}).
\end{eqnarray}
Then $\mu_P$ is a probability function on MEL-formulas. Notice that, subsets of $\Omega$ are indeed the elements (atoms) in $2^{2^\Omega}$, hence $\mu_P(\Phi) 
= \sum_{E \models \Phi} P(\{E\})$. Then: 
\begin{itemize}
\item $\mu_P(\neg \Phi) = P(\{E \subseteq \Omega \mid E \not \models \Phi\}) = P(2^{2^\Omega} \setminus \{E \subseteq \Omega \mid E \models \Phi\}) = 1 - P(\{E \subseteq \Omega \mid E \models \Phi\}) = 1 - \mu_P(\Phi)$.

\item $\mu_P(\Phi \lor \Psi) = P(\{E \subseteq \Omega \mid E\models \Phi \lor \Psi\}) = P(\{E \subseteq \Omega \mid E\models \Phi\} \cup  \{E \subseteq \Omega \mid E\models \Psi\}) = P(\{E \subseteq \Omega \mid E\models \Phi\}) +  P\{E \subseteq \Omega \mid E\models \Psi\}) - P(\{E \subseteq \Omega \mid E\models \Phi\} \cap \{E \subseteq \Omega \mid E\models \Psi\}) =  \mu_P(\Phi) + \mu_P(\Psi) - \mu_P(\Phi \land \Psi)$.

\item if $\vdash_{\rm MEL} \Phi$, then $\mu_P(\Phi) = P(\{E \subseteq \Omega \mid E \models \Phi\}) =  P(\{E \subseteq \Omega\}) = P(2^{2^\Omega}) = 1$. 

\end{itemize}
Conversely, let  $\mu: {\cal  L}_{MEL} \to [0, 1]$ be a probability on MEL-formulas, and define the mapping  $P_\mu: 2^{\Omega} \to [0, 1]$ on $2^\Omega$ as follows: for every $E \subseteq \Omega$, 
\begin{eqnarray} \label{eq2}
 P_\mu(\{E\}) = \mu(\Sigma_E), 
 \end{eqnarray}
where $\Sigma_S$ is a MEL-formula having $S$ as the only model (such a formula always exists in a finite setting, recall Section \ref{sec-MEL}). So defined, $P_\mu$  is a probability distribution on $2^\Omega$, that is, $\sum_{E \subseteq \Omega}  P_\mu(\{E\}) = 1$. This follows from the fact that, if $\Phi$ is a theorem of MEL then, by $(\Pi 3)$, we have $\mu(\Phi) = 1$, and by \eqref{equiv} and $(\Pi 2)$, we have: 
$$1 = \mu(\Phi) = \mu(\bigvee_{E \subseteq \Omega} \Sigma_E) =  \sum_{E \subseteq \Omega} \mu(\Sigma_E) =  \sum_{E \subseteq \Omega} P_ {{\mu}}(\{E\}).$$
Then, by additivity, $P_\mu$ can be naturally extended to a probability measure on the whole space $2^{2^\Omega}$, that we will keep denoting it by $P_\mu$.  Namely, for each set of sets $\{S_1, \ldots, S_m\}$, we have $P_\mu(\{S_1, \ldots, S_m\}) = \sum_i P_\mu(\{S_i\})$. 
Moreover, $\mu_{P_\mu} = \mu$ and $P_{\mu_P} = P$. Indeed, 
$$\mu_{P_\mu}(\Phi) = P_\mu(\{ E \mid E \models \Phi\}) = \sum_{E \models \Phi} P_\mu(E) \stackrel{\eqref{eq2}}{=}  \sum_{E \models \Phi}\mu(\Sigma_E) = \mu(\Phi), $$
the last equality being a consequence of \eqref{incomp} and \eqref{equiv}. On the other hand, 
\begin{align*}P_{\mu_P}(\{S_1, \ldots, S_m\})&=  \sum_i P_{\mu_P}(\{S_i\}) \stackrel{\eqref{eq2}}{=}  \sum_i \mu_P(\Sigma_{S_i}) \\ & \stackrel{\eqref{eq1}}{=}  \sum_i P(\{S_i\}) = P(\{S_1, \ldots, S_m\}). \end{align*}

\subsection{Belief functions on propositional formulas}
 
{Here we show the connection between probability functions on MEL formulas and belief functions on propositional formulas.} 

\begin{definition}
A belief function on formulas of $\cal L$ is a mapping $bel: {\cal L} \to [0, 1]$ such that:
\begin{itemize}
\item[(B1)] $bel(\varphi) = 1$ and $bel(\neg \varphi) = 0$, if $\vdash_{\rm CPL} \varphi$, 

\item[(B2)] $bel(\varphi_1 \lor ... \lor \varphi_n) \geq 
\sum_{I \subseteq \{1, \dots,n\}}(-1)^{|I|+1}bel(\land_{i \in I}\varphi_i), \forall n \geq 2$, 
\hfill ($\infty$-monotonicity),\footnote{This property is characteristic of belief functions. If the inequality is replaced by equality, $bel$ is a probability measure, and it can be written at order 2 only. However, inequalities must be assumed at any order. See \cite{Shafer}.}

\item[(B3)] $bel(\varphi) = bel(\psi)$, whenever $\vdash_{\rm CPL} \varphi \equiv \psi$. 
\end{itemize}
\end{definition}

If the language is finitely generated, then it is easy to show that any such belief function on formulas of $\cal L$ is determined by a belief (set) function {on $\Omega$},  the set of classical interpretations of $\cal L$. Indeed, if $Bel: 2^\Omega \to [0, 1]$ is a belief function, then the corresponding mapping $bel$ on ${\cal L}$ defined as 
\begin{eqnarray} \label{bel}
bel(\varphi) = Bel(\{w \in \Omega \mid w\models \varphi \})
\end{eqnarray}
is clearly a belief function on formulas of $\cal L$. Conversely, if $bel$ is a belief function on formulas of $\cal L$, then we can define a mapping $Bel: 2^\Omega \to [0, 1]$ by putting, for every $E \subseteq \Omega$, 
\begin{eqnarray} \label{Bel}
Bel(E) = bel(\varphi_E),
\end{eqnarray}
where {$\varphi_E$ is a propositional formula whose set of models} 
is  $E$. 
This formula always exists in the finite setting, and moreover this is well defined because belief functions on formulas respect classical equivalence. \\

\noindent \underline{Notation convention}: From now on, without danger of confusion, if $Bel$ is a belief function on subsets of interpretations we will denote its corresponding belief function on formulas by $bel$, and conversely.  \\

Finally, let us explicitly show the one-to-one relationship between probabilities on MEL-formulas and belief functions on propositional formulas.

\begin{proposition} \label{prop4.4} A mapping  $bel: {\cal L} \to [0, 1]$ is a belief function on formulas from ${\cal L}$ iff there is a probability $\mu$ on MEL-formulas from ${\cal L}_{\Box}$  such that $bel(\varphi) = \mu(\Box \varphi)$.  
\end{proposition}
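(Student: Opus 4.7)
My plan is to prove both directions of the equivalence by reducing everything to the already-established bijection between probabilities on MEL-formulas and probability distributions on non-empty subsets of $\Omega$, and then invoking the classical mass-function representation of belief functions.

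For the $(\Leftarrow)$ direction, suppose $\mu$ is a probability on MEL-formulas and define $bel(\varphi) := \mu(\Box \varphi)$. Using the correspondence $\mu \leftrightarrow P_\mu$ established by equations \eqref{eq1} and \eqref{eq2}, and the fact that $E \models \Box \varphi$ iff $E \subseteq Mod(\varphi)$, I would compute
\[
\mu(\Box \varphi) \;=\; P_\mu(\{E \subseteq \Omega \mid E \models \Box \varphi\}) \;=\; \sum_{E \subseteq Mod(\varphi)} m(E),
\]
where $m(E) := P_\mu(\{E\})$ for $E \neq \emptyset$ and $m(\emptyset):=0$. Since $\sum_{E} m(E) = 1$, $m$ is a Shafer mass assignment on $\Omega$, and so $A \mapsto \sum_{E \subseteq A} m(E)$ is a belief function on $2^\Omega$ (by Shafer's classical result). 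Composing with $Mod$ and using the notation convention, I get that $bel$ is a belief function on $\mathcal{L}$: axioms (B1) and (B3) are then immediate from Lemma~\ref{preserv} and the fact that classically equivalent $\varphi, \psi$ satisfy $\vdash_{\rm MEL} \Box \varphi \equiv \Box \psi$, while $\infty$-monotonicity (B2) transfers from the corresponding property of $Bel$ via the identity $Mod(\varphi_1 \lor \cdots \lor \varphi_n) = \bigcup_i Mod(\varphi_i)$.

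For the $(\Rightarrow)$ direction, given a belief function $bel$ on $\mathcal{L}$, I would pass to the corresponding belief function $Bel$ on $2^\Omega$ via \eqref{Bel} and take the associated mass function $m$ obtained by Möbius inversion, so that $Bel(A) = \sum_{E \subseteq A} m(E)$, with $m(\emptyset)=0$ and $\sum_E m(E)=1$. Define a probability $P$ on $2^{2^\Omega}$ by setting $P(\{E\}) := m(E)$ and extending additively, and let $\mu := \mu_P$ as in \eqref{eq1}. By the first direction (or by direct computation), $\mu_P(\Box\varphi) = \sum_{E \subseteq Mod(\varphi)} m(E) = Bel(Mod(\varphi)) = bel(\varphi)$, as required.

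The main (minor) obstacle is simply making sure the notational bookkeeping between the three objects — belief function on formulas, belief function on $2^\Omega$, and mass function — is tight, and that the identification $E \models \Box \varphi \iff E \subseteq Mod(\varphi)$ is used correctly so that $\{E : E \models \Box\varphi\}$ is genuinely the set of focal-set candidates contributing to $Bel(Mod(\varphi))$. Once this is in place, both implications reduce to the single computation $\mu(\Box\varphi) = \sum_{E \subseteq Mod(\varphi)} m(E)$, so no further MEL-specific machinery beyond what was developed in Section~\ref{P-on-MEL} is needed.
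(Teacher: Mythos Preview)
Your proposal is correct, and the $(\Rightarrow)$ direction is essentially identical to the paper's. The $(\Leftarrow)$ direction, however, follows a genuinely different route. The paper verifies (B1)--(B3) \emph{directly} from the probability axioms $(\Pi1)$--$(\Pi3)$ together with MEL theorems: for (B2) it uses that $(\bigvee_i \Box\varphi_i) \to \Box(\bigvee_i \varphi_i)$ is a MEL theorem (so $\mu(\bigvee_i \Box\varphi_i) \leq bel_\mu(\bigvee_i \varphi_i)$ by Lemma~\ref{preserv}), then expands $\mu(\bigvee_i \Box\varphi_i)$ by inclusion--exclusion, replacing each $\Box\varphi_i \land \Box\varphi_j$ by $\Box(\varphi_i \land \varphi_j)$ via the K axiom. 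You instead pass through the bijection $\mu \leftrightarrow P_\mu$ to obtain a mass assignment $m$, invoke Shafer's classical result that $A \mapsto \sum_{E \subseteq A} m(E)$ is a belief function on $2^\Omega$, and then pull (B2) back to formulas via $Mod$. Your route is shorter and avoids the inclusion--exclusion computation, at the cost of importing the external fact that mass assignments induce belief functions; the paper's route is more self-contained and makes explicit which MEL axioms (K, D, Nec) are responsible for which belief-function properties. A small redundancy in your write-up: once you have $bel(\varphi) = Bel(Mod(\varphi))$ with $Bel$ a set-level belief function, (B1) and (B3) also follow immediately from the properties of $Bel$ and of $Mod$, so the appeal to Lemma~\ref{preserv} there is unnecessary.
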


\begin{proof}  Let $bel: {\cal L} \to [0, 1]$ be a belief function on formulas of $\cal L$. Then let $Bel: 2^\Omega \to [0, 1]$ be the belief function on $\Omega$  
as defined in \eqref{Bel}. Then let $m_{bel}: 2^\Omega \to [0, 1]$ be the corresponding mass distribution on $2^\Omega$, obtained by the M\"obius transform of $Bel$, that is, the unique set function such that, for every $E \subseteq \Omega$, $Bel(E) = \sum_{F \subseteq E} m_{bel}(F)$. Let us denote by $P_{bel}$ the corresponding probability measure on $2^{2^\Omega}$, and let  $\mu_{P_{bel}}$ be its associated probability function on formulas according to \eqref{eq1}.  Then we have:
$$bel(\varphi)= Bel(Mod(\varphi)) = \sum_{F \subseteq Mod(\varphi)} m_{bel}(F) = P_{bel}(\{F \mid F \models \Box \varphi\}) = \mu_{P_{bel}}(\Box \varphi).$$
Conversely, let us consider a  probability $\mu: {\cal L}_{\Box} \to [0,1]$ on MEL-formulas, and let us check that the mapping $bel_\mu: {\cal L} \to [0, 1]$ defined as $$bel_\mu(\varphi) = \mu(\Box \varphi)$$
is a belief function on propositional formulas. 
Indeed, let us check that $bel_\mu$ satisfies the conditions  (B1), (B2) and (B3). As for (B1), if $\vdash_{\rm CPL} \varphi$, then $\vdash_{\rm MEL} \Box\varphi$ as well, and hence, by $(\Pi  3)$, we have $bel_\mu(\varphi) = \mu(\Box \varphi) = 1$. Also, $\Box \neg \varphi \to \Diamond \neg \varphi$ is an instantiation of Axiom (D), but $\Diamond \neg \varphi$ is equivalent to $\neg \Box \varphi$, and hence $\Box \neg \varphi \to \neg \Box \varphi$ is a MEL theorem. By (i) of Lemma \ref{preserv}, we have $\mu(\Box \neg \varphi) \leq \mu(\neg \Box \varphi)$, but  $\mu(\neg \Box \varphi) = 1 - \mu(\Box \varphi) = 1 - 1 = 0$. 

As for (B2), note that for any propositions $\varphi_1, \ldots, \varphi_n$, the MEL-formula $$(\bigvee_{i=1,n} \Box \varphi_i) \to \Box(\bigvee_{i=1,n} \varphi_i)$$ is a theorem of MEL, and hence, by Lemma \ref{preserv}, $\mu(\bigvee_{i=1,n} \Box \varphi_i) \leq \mu(\Box(\bigvee_{i=1,n} \varphi_i)) )= bel_\mu(\bigvee_{i=1,n} \varphi_i)$.

On the other hand, by applying  iteratively the additivity property of $\mu$ we get: 
$$
\begin{array}{l}
\mu(\bigvee_{i=1,n} \Box \varphi_i) =\\
= \sum_{i} \mu(\Box\varphi)- \sum_{ij} \mu(\Box(\varphi_i \land \varphi_j)) + \sum_{ijk} \mu(\Box(\varphi_i \land \varphi_j \land \varphi_k)) - ...  = \\
 =  \sum_{i} bel_\mu(\varphi)- \sum_{ij}bel_\mu(\varphi_i \land \varphi_j) + \sum_{ijk}bel_\mu(\varphi_i \land \varphi_j \land \varphi_k) - ... \\
\end{array}
$$
Therefore, we get:
$$
\begin{array}{l}
bel_\mu(\bigvee_{i=1,n} \varphi_i) = \mu(\bigvee_{i=1,n} \Box \varphi_i)  \geq  \\
\geq \sum_{i}bel_\mu(\varphi)- \sum_{ij}bel_\mu(\varphi_i \land \varphi_j) + \sum_{ijk}bel_\mu(\varphi_i \land \varphi_j \land \varphi_k) - ... \\
\end{array}
$$
that is, $bel_\mu$ also satisfies (B2). Finally (B3) is a direct consequence of (ii) of Lemma \ref{preserv}. 
\end{proof}
The mass assignment $m_{Bel}(E)$ is equal to the probability $P(\Sigma_E)$, where the modal formula $\Sigma_E$ introduced earlier has $E$ for only model (it expresses the idea that all that is known is $E$). It has been recalled that $m_{Bel}$ is the M\"obius transform of $Bel$ 
$$m_{Bel}(E)= \sum_{A: E\subseteq A} (-1)^{|A\setminus E|} Bel(A).$$
 In  \citep{BD13}, it has been shown that this M\"obius transform formula, in the all-or-nothing case ($m_{Bel}(E)= 1$), reduces to the modal formula $\Sigma_E$, which is in agreement with the belief function logic presented in the next section.

\section{A two-layer modal logic over {\L}ukasiewicz logic for belief functions}\label{BF-log}

As should be clear from the previous section, the problem of defining a logic for belief functions on classical propositions can be reduced to defining a probability logic over MEL. This is what we do in this section, using the approach by \citep{HGE95} and \citep{H98} to define probability logics as modal theories over {\L}ukasiewicz logic, and greatly simplifying the approach in  \citep{GHE01,GHE03} that defined a logic for belief functions as a probabilistic logic over the whole modal logic S5. 

\subsection{The logic FB(MEL, \!\L): syntax, semantics and proof system }
 We consider now the two-layer  logic FB(MEL, \!\L), where $\L$ is {\L}ukasiewicz logic, to reason about the probability of MEL formulas. The guiding idea, as already mentioned in the Introduction and following from  \citep{H98}, is to formalize the fact that belief functions on classical propositions can be seen as probabilities on modal formulas. To this end, we proceed like \citet{H98} who formalises probability as a fuzzy modality $P$ in the setting of {\L}ukasiewicz logic
 to reason about the probability of classical propositions;
but this time, in the following, the modality $P$ will apply not to classical propositions but to MEL formulas.

The language of FB(MEL, \!\L) consists of the following two kinds of formulas:  

\begin{itemize}
\item[-]  {\em MEL-formulas}: taken from the language ${\cal L}_\Box$, built over a finitely generated propositional language $\cal L$, as defined in Section \ref{sec-MEL}. General MEL-formulas will be denoted by capital greek letters $\Phi, \Psi, ...$

\item[-]  {\em P-formulas}: atomic P-formulas are of the form $P \Phi$, where $\Phi$ is a MEL formula; compound P-formulas are propositional combinations of atomic ones with {\L}ukasiewicz connectives $\to_L, \&, \neg_L$, and will be denoted by capital letters $\mathcal{A, B}, \ldots$. In particular, we will call 
{\em B-formulas} to those built from P-formulas of the form $P \Box \varphi$, that will be also denoted as $B\varphi$.
\end{itemize}

The semantics of FB(MEL, \!\L) is basically given by belief functions $Bel: 2^\Omega \to [0, 1]$ on subsets of the set $\Omega$ of interpretations for the propositional language $\cal L$, or equivalently by their corresponding belief functions on formulas, i.e., $bel:{\cal L} \to [0, 1]$.
Recall that epistemic MEL-models are just non-empty subsets of interpretations from $\Omega$, hence a belief function $Bel: 2^\Omega \to [0, 1]$ is actually a belief function on the set of MEL-models.

Then, the evaluation of an arbitrary atomic P-formula $\Phi$ by a belief function  on formulas $bel$ 
is {defined} as follows (recall the definition of the probability $P_{bel}$ on $2^{2^\Omega}$ introduced in the proof of Proposition \ref{prop4.4}): 
%
$$\| P \Phi \|_{bel} = P_{bel}(\{ E \subseteq \Omega \mid E \models \Phi \}) = \sum_{E \models \Phi} P_{bel}(E).$$
Note that, in the particular case when $\Phi$ is a B-formula, that is, a formula of the form $\Phi = \Box \varphi$, then 
$$\| P \Box \varphi \|_{bel} =   \sum_{E \models \Phi} P_{bel}(E) = \sum_{E \subseteq Mod(\varphi)} P_{bel}(E) = Bel(Mod(\varphi)) = bel(\varphi), $$
and hence,  B-formulas are faithfully interpreted by belief functions. 

The truth-evaluation of compound P-formulas is then defined by using {\L}ukasiewicz truth-functions. For instance, 
\begin{eqnarray*}
\| P \Phi  \to_L P \Psi \|_{bel} &=& \min(1, 1- \| P \Phi \|_{bel} + \| P \Psi \|_{bel})\\
\| P \Phi  \;\&\; P \Psi \|_{bel} &=& \max(0,  \| P \Phi \|_{bel} + \| P \Psi \|_{bel} -1) \\
\| P \Phi  \land P \Psi \|_{bel} &=& \min(\| P \Phi \|_{bel}, \| P \Psi \|_{bel}) \\
\| P \Phi  \lor P \Psi \|_{bel} &=& \max(\| P \Phi \|_{bel}, \| P \Psi \|_{bel}) \\
\| \neg_L P \Phi \|_{bel} &=& 1 -\| P \Phi \|_{bel} 
\end{eqnarray*}
The above semantics allows us to define a natural notion of logical consequence for P-formulas.

\begin{definition}
For any set  $T \cup \{\mathcal A\}$ of P-formulas, we say that $\mathcal A$ logically follows from $T$ in FB(MEL, \!\L), written $T \models_{FB} \mathcal A$, if for any belief function on formulas $bel$, we have that $\| \mathcal B \|_{bel} = 1$ for all $\mathcal B \in T$ implies $\| \mathcal A \|_{bel} = 1$ as well. 
 \end{definition}

Now we introduce the following axiomatic system for the belief function logic FB(MEL, \!\L): 

\begin{itemize}
\item Axioms and rules of MEL for MEL-formulas
\item Axioms and rules of {\L}ukasiewicz logic for P-formulas
\item Probabilistic axioms (for P-formulas):

%
%
%
%
%

\begin{tabular}{l l}
(FP0) \ \ \ \ & $P\Phi$, for $\Phi$ being a theorem of MEL \\
(FP1) \ \ \ \ & $P(\Phi \to \Psi) \to_L (P\Phi \to_L P \Psi)$\\
(FP2)& $P(\neg \Phi) \equiv \neg_L P\Phi$\\
(FP3)& $P(\Phi \vee \Psi) \equiv (P\Phi \to_L P(\Phi \wedge \Psi)) \to_L P\Psi $ \\
\end{tabular}
\end{itemize}
{(FP3) might look a bit mysterious, but for a probability measure $P$ we indeed have that 
$\min(1, 1 - \min(1, 1 - P(A) + P(A \cap B)) + P(B))= \min(1, \max(0,  P(A) - P(A \cap B)) + P(B))= \min(1, \max(P(B), P(A)  + P(B)- P(A \cap B))=P(A \cup B)$.}
{Actually, this axiom can be shown to be logically equivalent to this other scheme
$$ P(\Phi \vee \Psi) \equiv P\Phi \oplus (P\Psi \ominus P(\Phi \land \Psi))$$
that better expresses the finite additivity property of the modal operator $P$. }

Using these axioms and rules, one can then define a syntactic notion of proof for  P-formulas. 

\begin{definition}
 Let $T \cup \{A\}$ be a set of P-formulas. Then $A$ syntactically follows from $T$ in FB(MEL, \L), written $T \vdash_{FB} A$, if it can be proved from $T$ in the natural way from the above axioms and inference rules. 
 \end{definition}

Note that, since $(\Box \varphi \lor \Box \psi) \to \Box (\varphi \lor \psi)$ is a theorem of MEL, then by the necessitation rule, $\vdash_{FB} P((\Box \varphi \lor \Box \psi) \to \Box (\varphi \lor \psi))$, and by  (FP1), we have that $\vdash_{F{B}} P(\Box \varphi \lor \Box \psi) \to_L P\Box (\varphi \lor \psi)$. In general, for any $n$, the formula $$P(\bigvee_{i=1,n} \Box \varphi_i) \to_L P( \Box\bigvee_{i=1,n} \varphi_i),$$  
is a theorem of FB(MEL, \L). 
\subsection{Soundness and completeness}

Our next task is to show that the above axiom system is sound and complete for the belief function semantics introduced above. To provide a proof we first need some preparation. The idea is to translate proofs from finite theories in  FB(MEL, \L) into proofs from larger but still finite theories in {\L}ukasiewicz logic, and then to take advantage of completeness of {\L} to show that models of this larger theory correspond to belief functions on formulas. 

The usual strategy to prove completeness of probabilistic modal logics like \FB\ w.r.t. probabilistic models consists in the following steps (see e.g.\  \citep{FGM11,CN14} for more details):
\begin{enumerate}

\item[(S1)]  First of all we define a syntactic translation $^\circ$ from modal to propositional formulas  of \L ukasiewicz logic by interpreting every atomic modal formula $P\Phi$ in a new propositional variable $p_\Phi$ and extending $^\circ$ to compound modal formulas by commuting with connectives: 

- $(P\Phi)^\circ = p_\Phi$

- $(\neg_L \mathcal A)^\circ = \neg_L\mathcal A^\circ$

- $(\mathcal A *\mathcal B)^\circ =\mathcal A^\circ *\mathcal B^\circ$, for $* \in \{\to_L, \&, \lor, \land\}$

\item[(S2)]  The translation of all instances of the axioms  (FP0)-(FP3), 
gives rise to a propositional \L-theory ${\bf FP}^\circ$ such that, for every (finite) set of modal formulas $T\cup\{\mathcal A\}$, 
\begin{center}
$T\vdash_{FB} \mathcal A$ iff $T^\circ \cup {\bf FP}^\circ\vdash_{\text{\L}} \mathcal A^\circ$, 
\end{center}
see for instance  \citep{FG07,FGM11} and  \citep{H98}.
\end{enumerate}

Now, assume that  $T\not\vdash_{FP} \mathcal A$ and hence $T^\circ \cup {\bf FP}^\circ\not\vdash_{\text{\L}} \mathcal A^\circ$. Now, even if $T$ is finite, ${\bf FP}^\circ$ is an infinite theory, and since {\L}ukasiewicz logic is not strongly standard complete, i.e.\ standard complete with respect to deductions from infinite theories, then we cannot guarantee in principle the existence of a {\L}ukasiewicz valuation in the real unit interval $[0, 1]$ that is a model of $T^\circ \cup {\bf FP}^\circ$ and a countermodel of  $A^\circ$. However, since we are assuming that the propositional language $\cal L$ is finitely generated, both $\cal L$ and ${\cal L}_{\Box}$ contain only finitely-many different formulas modulo logical equivalence. Thus, for each propositional formula $\varphi$ we can choose one representative $\varphi^*$ of its equivalence class. Moreover, the MEL sublanguage of ${\cal L}_{\Box}$ generated by atomic modal formulas of the form $\Box \varphi^*$, although infinite, has also finitely-many non-logically equivalent formulas, and so again, for any MEL-formula $\Phi$ in this sublanguage we can pick a representative $\Phi^*$ of its equivalence class. Let us denote by  ${\cal L}^*_{MEL}$  this finite set of MEL-fomulas. Finally let $({\bf FP}^\circ)^*$ be the finite subtheory of $ {\bf FP}^\circ$ built from instances of the axioms (FP0)-(FP3) with formulas of  ${\cal L}^*_{\Box}$. Then one can show the following chain of equivalences:
\begin{center}
 $T\vdash_{FB} \mathcal A$ \; iff \; $T^\circ \cup {\bf FP}^\circ\vdash_{\text{\L}} \mathcal A^\circ$ \; iff  \; $T^\circ \cup({\bf FP}^\circ)^*\vdash_{\text{\L}} \mathcal A^\circ$.
\end{center}
Now,  $T^\circ \cup({\bf FP}^\circ)^*$ is finite, and thus, by completeness of {\L}ukasiewicz logic, there is an $[0, 1]$-valued {\L}ukasiewicz valuation $e$ such that $e(T^\circ \cup({\bf FP}^\circ)^*) = 1$ and $e({\cal A}^\circ) < 1$. But since $e$ is a model of the (translations of the) axioms (FP0)-(FP3), then the mapping $\mu_e$ on MEL-formulas $\Phi$ defined as 
$$\mu_e(\Phi) = e(p_{\Phi^\circ})$$
is a probability on MEL-formulas, and hence the mapping $bel_{\mu_e}: {\cal L} \to [0, 1]$ defined as 
$$bel_{\mu_e}(\varphi) = \mu_e(P\varphi)$$
is a belief function on formulas that is a model of $T$ but does not satisfy $\cal A$. 

Thus, the following completeness result holds. 

\begin{theorem} \label{theo2} {\bf (Belief function completeness of  \FB)} 
Let $T$ be a finite modal theory over \FB\ and $\mathcal  A$ a 
P-formula of \FB. Then $T\vdash_{FB} \mathcal A$ iff $\| \mathcal A \|_{bel} = 1$ for each
 belief function $bel$  model of $T$.
\end{theorem}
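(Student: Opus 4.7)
The plan has two directions. Soundness is a routine check: under the evaluation $\|P\Phi\|_{bel} = P_{bel}(\{E \mid E\models \Phi\})$, each probabilistic axiom receives value $1$ at every belief function $bel$. Specifically, (FP0) follows because for a MEL-theorem $\Phi$ we have $\{E \mid E\models \Phi\} = 2^\Omega\setminus\{\emptyset\}$; (FP1) uses the monotonicity consequence in Lemma \ref{preserv} together with the characterization of $\|\cdot\|_{bel}$ via $P_{bel}$; and (FP2)-(FP3) mirror complementation and finite additivity of $P_{bel}$ on $2^{2^\Omega}$. Modus ponens and the \L-axioms preserve the truth value $1$ by soundness of \L.

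For completeness I would follow the two-layer translation strategy sketched in steps (S1)-(S2) immediately above the theorem. First, define the translation $^\circ$ that sends every atomic P-formula $P\Phi$ to a fresh propositional variable $p_\Phi$ and commutes with the \L-connectives, and gather the $^\circ$-images of all instances of (FP0)-(FP3) into a \L-theory $\mathbf{FP}^\circ$. The first key claim to establish is the reduction
\begin{center}
$T\vdash_{FB}\mathcal{A}$ \ iff \ $T^\circ\cup \mathbf{FP}^\circ \vdash_{\text{\L}} \mathcal{A}^\circ,$
\end{center}
which is by now a standard device in two-level modal fuzzy logics (cf.\ \citep{FG07,FGM11,H98}): each FB-step is simulated either by an \L-step on the translated side or by absorbing a MEL/(FP)-axiom as a \L-hypothesis in $\mathbf{FP}^\circ$.

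The main obstacle is that $\mathbf{FP}^\circ$ is infinite while Theorem \ref{lukCompleteness} only gives standard completeness of \L\ for deductions from \emph{finite} theories. To bypass this I would exploit the assumption that $\cal L$ is finitely generated: modulo provable equivalence both $\cal L$ and its induced MEL sublanguage contain only finitely many formulas, so after fixing a representative $\Phi^*$ in each MEL equivalence class one obtains a finite sublanguage ${\cal L}^*_\Box$ and a corresponding \emph{finite} subtheory $(\mathbf{FP}^\circ)^*$. Using (FP0) together with the necessitation and (FP1) to FB-derive $P\Phi\equiv P\Phi^*$ (and hence, after translation, $p_\Phi\equiv p_{\Phi^*}$ in \L\ from $\mathbf{FP}^\circ$), any proof in $T^\circ\cup\mathbf{FP}^\circ$ can be normalised onto the starred representatives, yielding
\begin{center}
$T^\circ\cup \mathbf{FP}^\circ \vdash_{\text{\L}} \mathcal{A}^\circ$ \ iff \ $T^\circ\cup (\mathbf{FP}^\circ)^* \vdash_{\text{\L}} \mathcal{A}^\circ.$
\end{center}

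Once this finite reduction is secured, the remainder of the argument is routine. Assume $T\not\vdash_{FB}\mathcal{A}$; by the chain of equivalences just obtained and Theorem \ref{lukCompleteness} there is a $[0,1]$-valued \L-valuation $e$ with $e(T^\circ\cup(\mathbf{FP}^\circ)^*)=1$ and $e(\mathcal{A}^\circ)<1$. Setting $\mu_e(\Phi):=e(p_{\Phi^\circ})$, the fact that $e$ satisfies the translated (FP0)-(FP3) is precisely what is needed for $\mu_e$ to satisfy $(\Pi 1)$-$(\Pi 3)$, so $\mu_e$ is a probability on MEL-formulas. Proposition \ref{prop4.4} then converts $\mu_e$ into a belief function $bel_{\mu_e}(\varphi):=\mu_e(\Box\varphi)$ on $\cal L$ that models $T$ but refutes $\mathcal{A}$, yielding the contrapositive of the direction we want. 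The whole difficulty is therefore concentrated in the passage from $\mathbf{FP}^\circ$ to $(\mathbf{FP}^\circ)^*$; once that reduction is justified, the bridge between \L-valuations and belief functions provided by Section \ref{P-on-MEL} does the rest.
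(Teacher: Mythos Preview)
Your proposal is correct and follows essentially the same route as the paper's own argument: the translation $^\circ$ into propositional \L, the reduction of the infinite theory $\mathbf{FP}^\circ$ to a finite $(\mathbf{FP}^\circ)^*$ via representatives in the finitely generated language, finite standard completeness of \L, and the passage from the resulting valuation to a probability on MEL-formulas and thence (via Proposition~\ref{prop4.4}) to a belief function. If anything, you are slightly more explicit than the paper about \emph{why} the reduction to $(\mathbf{FP}^\circ)^*$ is legitimate (deriving $P\Phi\equiv P\Phi^*$ from (FP0)--(FP1)), a step the paper simply asserts.
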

\subsection{Reasoning with comparative beliefs}

 \FB\ can be used to reason in a purely qualitative way about comparative statements about belief functions on propositions, as done in  \citep{Harmanec1}, by exploiting the fact a \FB-formula of the form $B\psi \to B\varphi$ is 1-true in a model defined by a belief function $bel$ iff $bel(\psi) \leq bel(\varphi)$. 
 Therefore, if we represent the statement ``the event $\varphi$ is at least as believed as the event $\psi$'' as $\psi \triangleleft \varphi$, then an inference of the form 
$$\mbox{from } \psi_1 \triangleleft \varphi_1, \ldots, \psi_n \triangleleft \varphi_n  \mbox{ infer }  \chi \triangleleft \nu$$
can be faithfully captured by the following derivation in \FB
$$B\psi_1 \to B\varphi_1, \ldots, B\psi_n \to B\varphi_n \vdash_{FB} B\chi \to B\nu .$$

{
In fact,  \citet{Wong91} axiomatically characterise the strict comparative relations $\succ$ among subsets of a finite set $X$ induced by belief functions on $2^X$. Here we consider an equivalent set of axioms for the non-strict comparative relation $\succeq$. Indeed, let us consider the following postulates for a relation $\succeq$ on $2^X$: 
$$
\begin{array}{ll}
(BW1) & (2^X, \succeq)  \mbox{ is a total pre-order, that is for any $A, B, C \subseteq X$,} \\ 
& - A\succeq A \\
& - A \succeq B, B \succeq C  \mbox{ implies } A \succeq C \\
& - A \succeq B \mbox{ or } B \succeq A \\
%
%
(BW2) & A \supseteq B    \mbox{ implies } A \succeq B \\ 

(BW3) &   \mbox{Whenever }A \supseteq B  \mbox{ and } A \cap C = \emptyset, \\ &\mbox{ we have that } (\mbox{if }  B \cup C  \succeq A \cup C  \mbox{ then }  B \succeq A) \\

(BW4) &  \emptyset \not \succeq X \\
\end{array}
 $$
 Note that BW2 expresses monotonicity of belief functions with respect to inclusion; so due to the  condition $A \supseteq B$ in BW3,  the assumption $B \cup C  \succeq A \cup C $  is actually equivalent to $B \cup C  \sim A \cup C $, where $\sim$ is the equivalence relation contained in $\succeq$ (and likewise for the conclusion $B \succeq A$). So BW3 is a condition of equivalence preservation.
 BW3 is in fact a weakening of  the axiom of comparative probability  \citep{DeF23} (where the condition $A \supseteq B$ is not required, but $B \cap C = \emptyset$ as well, and the last implication is an equivalence).
 
  Then the following representation is an equivalent reformulation of \cite[Th. 4]{Wong91}. 
 
 \begin{theorem} Let $X$ be a finite set and $\succeq$ a  relation on $2^X$. Then, there exists a belief function $Bel: 2^X \to [0, 1]$ such that 
$$ A \succeq B   \mbox{ iff }   Bel(A) \geq Bel(B)$$
if and only if the relation $\succeq$ satisfies the properties (BW1)-(BW4). 
\end{theorem}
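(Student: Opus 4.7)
The plan is to derive the theorem as a direct reformulation of \cite[Th.~4]{Wong91}, which characterises those strict relations $\succ$ on $2^X$ (with $X$ finite) of the form $A\succ B$ iff $Bel(A)>Bel(B)$ for some belief function $Bel$. The strategy is to pass between $\succeq$ and its strict part $\succ$, defined by $A\succ B$ iff $A\succeq B$ and $B\not\succeq A$, and show that (BW1)--(BW4) on $\succeq$ are equivalent to Wong et al.'s axioms on $\succ$; the two representation statements then transfer from one side to the other thanks to the total pre-order property (BW1).

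For the direction ($\Rightarrow$), assuming a belief function $Bel$ represents $\succeq$, I would verify (BW1)--(BW4) in turn. (BW1) is immediate since $(\mathbb{R},\geq)$ is a total pre-order; (BW2) follows from monotonicity of $Bel$ with respect to inclusion; (BW4) holds because $Bel(\emptyset)=0<1=Bel(X)$. For (BW3), assuming $A\supseteq B$, $A\cap C=\emptyset$ and $B\cup C\succeq A\cup C$, monotonicity forces $Bel(A\cup C)=Bel(B\cup C)$. Expanding both sides via the nonnegative M\"obius mass $m$ of $Bel$, one obtains $m(E)=0$ for every $E\subseteq A\cup C$ with $E\not\subseteq B\cup C$. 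Any $E\subseteq A$ with $E\not\subseteq B$ lies in that collection, because $E\subseteq B\cup C$ combined with $E\subseteq A$ and $A\cap C=\emptyset$ would force $E\subseteq A\cap(B\cup C)=B$; hence $Bel(A)=Bel(B)$ and therefore $B\succeq A$.

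For the direction ($\Leftarrow$), given $\succeq$ satisfying (BW1)--(BW4), I would take $\succ$ to be its strict part and verify each of Wong et al.'s axioms. Most of these follow transparently from (BW1), (BW2) and (BW4); the key step is to check that the equivalence-preservation content of (BW3) on $\succeq$ matches exactly Wong et al.'s cancellation-type axiom on $\succ$, using that in a total pre-order the indifference $\sim$ is determined by $\succ$ via $A\sim B$ iff neither $A\succ B$ nor $B\succ A$. Their theorem then produces a belief function $Bel$ with $A\succ B$ iff $Bel(A)>Bel(B)$, and by totality of $\succeq$ this upgrades to $A\succeq B$ iff $Bel(A)\geq Bel(B)$.

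The main obstacle I anticipate is precisely this axiom-matching in the backward direction: showing that (BW3), whose content concerns inclusion-related sets becoming equivalent after disjoint union with a third set $C$, is equivalent to Wong et al.'s cancellation axiom once phrased purely in terms of $\succ$. This is a careful bookkeeping exercise about the interplay between the strict and non-strict parts of a total pre-order rather than a conceptually difficult step, but one has to ensure that the translation is tight, i.e.\ neither loses information nor smuggles in additional hypotheses.
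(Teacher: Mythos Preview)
Your proposal is correct and aligns with the paper's approach: the paper gives no proof of this theorem at all, simply stating it as ``an equivalent reformulation of \cite[Th.~4]{Wong91}'', so your plan to translate between the strict relation $\succ$ of Wong et al.\ and the non-strict $\succeq$ via (BW1) is exactly the intended route. Your direct verification of (BW3) in the forward direction via the M\"obius mass is a nice added detail that the paper omits entirely.
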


Now, in FB(MEL, \L) let us introduce the notation $\varphi \triangleright_B \psi$ to refer to the  formula $B\varphi  \to_L B\psi$, that is, we define:  
$$ \varphi \triangleright_B \psi := B\varphi  \to_L B\psi.$$
Looking at $\triangleright_B$ as a sort of propositional binary connective,  it turns out that $\triangleright_B$ precisely captures the semantics of the belief comparative relation $\succeq$ in the sense that  $\varphi \triangleright_B \psi$ is 1-true in a belief function model if, and only if,  the belief degree of $\varphi$ is greater than or equal to the belief degree of $\psi$. 

Then, by completeness of FB(MEL, \L), the following analogues of the properties (BW1)-(BW4) hold: 

\begin{itemize}

\item[(BW1)] $\vdash_{FB} \varphi \triangleright_B \varphi$
\\
 $\vdash_{FB} (\varphi \triangleright_B \psi) \&  (\psi \triangleright_B \chi) \to_L  (\varphi \triangleright_B \chi)$
\\
$\vdash_{FB} (\varphi \triangleright_B \psi) \lor  (\psi \triangleright_B \varphi)$

\item[(BW2)] If $\vdash_{CPL} \varphi \to \psi$ then $\vdash_{FB}  \varphi \triangleright_B \psi$ 

\item[(BW3)] If $\vdash_{CPL} (\varphi \to \psi)  \land \neg(\psi \land \chi)$ then $ (\psi \lor \chi) \triangleright_B (\varphi \lor \chi) \vdash_{FB} (\psi \triangleright_B \varphi) 
$

\item[(BW4)] $\vdash_{FB}   \neg_L(\top \triangleright_B \bot)$

\end{itemize}
where we recall that $\vdash_{CPL}$ denotes proof in classical propositional logic.

Nonetheless, taking advantage of the additive flavour of the semantics of {\L}ukasiewicz connectives, \FB\ also allows reasoning about  statements with a more quantitative flavour, e.g. an statement like ``$\varphi$ is as twice as believed as  $\psi$'' can be represented by the formula
$$B\psi \oplus B\psi \to_L B\varphi.$$
Indeed, recalling from Section \ref{Luk-logic} the interpretation of {\L}ukasiewicz strong disjunction $\oplus$, for any belief function on formulas $bel$, $\| B\psi \oplus B\psi \to_L B\varphi \|_{bel} = 1$ iff $bel(\varphi) \geq \min(2 \cdot bel(\psi), 1)$. 

%

\section{Reasoning quantitatively by adding truth-constants} \label{truthcons}

We have seen in the previous section that \FB\ is a suitable formalism to reason about belief functions basically in a qualitative or comparative way. If one wants to explicitly reason about numerical statements, like ``the belief of $\varphi$ is (or is at least, at most) 0.6'',  an elegant solution is to replace in \FB\  the outer logic \L\ by the so-called {\em Rational Pavelka logic}, denoted RPL,  that is the expansion of \L\ with rational truth-constants.

Before introducing the logic \FBR, 
we first briefly recall the main notions and properties of RPL. 
Following \citet{H98}, the language of RPL is the language of {\L}ukasiewicz logic expanded with countably-many truth-constants $\overline r$, one for each rational $r \in [0, 1]$. 
The evaluation of RPL formulas is as in {\L}ukasiewicz logic, with the proviso that valuations evaluate truth-constants to their intended value, that is, for any rational $r \in [0, 1]$ and any valuation $e$,  $e( \overline r) = r$. 
Note that, for any valuation $e$,  $e(\overline{r} \to \varphi) = 1$ iff $e(\varphi) \geq r$, and  $e(\overline{r} \equiv \varphi) = 1$ iff $e(\varphi) = r$. 

Axioms and rules of RPL are those of \L\ plus the following countable set of book-keeping axioms for truth-constants: \\

\begin{tabular}{l l}

(BK$_\to$)& $\overline{r} \to_L \overline{s} \equiv \overline{\min(1, 1-r+s)}$, for any rationals $r, s \in [0, 1].$\\

\end{tabular}
\ \\ \\
{Since all the other {\L}ukasiewicz connectives are definable from $\to_L$ and the constant $\overline 0$, similar book-keeping axioms are derivable, for instance, \\

\begin{tabular}{l l}

(BK$_{\&}$) & $\overline{r} \& \overline{s} \equiv \overline{\max(r+s-1, 0)}$, for any rationals $r, s \in [0, 1],$\\
(BK$_{\neg}$)& $\neg_L \overline{s} \equiv \overline{1-r}$, for any rational $r \in [0, 1].$\\

\end{tabular} 
}
\ \\ \\

The notion of proof is defined as in  {\L}ukasiewicz logic, and the deducibility relation will be denoted by $\vdash_{RPL}$. Moreover, completeness of {\L}ukasiewicz logic smoothly extends to RPL as follows: if $T$ is finite theory over RPL, then $T \vdash_{\text{RPL}} \varphi$ iff
$e(\varphi) = 1$ for any RPL-valuation $e$ model of $T$.

It is customary in RPL to introduce the following notions: 
for any set of RPL formulas $T \cup \{\varphi\}$, define: \\

\noindent - the {\em truth degree} of $\varphi$ in $T$ as:  \\ \centerline{$\| \varphi \|_T =  \inf \{e(\varphi) : e \mbox{ RPL-valuation model of } T \}$,} \\

\noindent - the {\em provability degree} of $\varphi$ from $T$ as: \\ \centerline{$ \mid \varphi \mid _T\; = \sup\{ r \in [0, 1]_{\mathbb Q} \mid T \vdash_{RPL} \overline{r} \to \varphi \}$.} \\

Then, the so-called {\em Pavelka-style completeness} for RPL refers to the result that  $\mid \varphi \mid _T\; = \| \varphi \|_T $ holds for any arbitrary (non necessarily finite) theory $T$  \citep{H98}. Note that $ \mid \varphi \mid _T\ = \alpha$ does not guarantee that $T \vdash_{RPL} \overline{\alpha} \to \varphi$, even if $\alpha$ is rational. However, 
if $T$ is finite, we can restrict ourselves to rational-valued {\L}ukasiewicz valuations and get the following result, 
proved in  \citep{H98}. 

\begin{proposition} \label{rational-comp} If $T$ is a finite theory over RPL, then: 
\begin{itemize}
\item  $\| \varphi \| _T$ is rational, hence if $\| \varphi \| _T =r$ then $T \vdash_{RPL} \overline{r} \to \varphi$.
\end{itemize}
In particular,  $\| \varphi \| _T =1$ iff $T \vdash_{RPL} \varphi$. 
\end{proposition}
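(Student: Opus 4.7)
The plan is to combine the finite standard completeness of RPL (which extends Theorem~\ref{lukCompleteness} to the book-keeping axioms) with the piecewise-linear geometry of McNaughton functions to show that the infimum defining $\|\varphi\|_T$ is actually attained at a rational point, and then to derive $T \vdash_{\text{RPL}} \overline{r} \to \varphi$ directly from finite completeness.

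First, I would restrict attention to the finite set $V$ of propositional variables appearing in $T \cup \{\varphi\}$, so that RPL-valuations correspond to points of the cube $[0,1]^V$. For each RPL-formula $\psi$ built over $V$, the book-keeping axioms (BK$_\to$), (BK$_\&$), (BK$_\neg$) guarantee that its truth function $e \mapsto e(\psi)$ is a continuous, piecewise-linear function on $[0,1]^V$ with rational coefficients and rational constant terms (a rational McNaughton function). Consequently, the set $\mathrm{Mod}(T) = \{e \in [0,1]^V : e(\psi) = 1 \text{ for all } \psi \in T\}$ is a finite intersection of closed rational polytopes, hence a compact rational polyhedral set.

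Now $e \mapsto e(\varphi)$ is itself a rational McNaughton function, so its infimum $\|\varphi\|_T$ over the compact set $\mathrm{Mod}(T)$ is attained at some vertex of a linear piece of $\mathrm{Mod}(T)$. Such a vertex has rational coordinates, and the value of a rational piecewise-linear function at a rational point is rational; therefore $\|\varphi\|_T = r$ for some rational $r \in [0,1]$. Once rationality is secured, for every $e \in \mathrm{Mod}(T)$ we have $e(\varphi) \geq r = e(\overline{r})$, which by the {\L}ukasiewicz truth-function of $\to_L$ means $e(\overline{r} \to_L \varphi) = 1$. Applying finite standard completeness of RPL (a direct extension of Theorem~\ref{lukCompleteness} with the book-keeping axioms added) to the finite theory $T$ yields $T \vdash_{\text{RPL}} \overline{r} \to \varphi$. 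The in-particular clause follows: if $r = 1$, then $T \vdash_{\text{RPL}} \overline{1} \to \varphi$, and since $\overline{1}$ is a theorem and modus ponens is available, $T \vdash_{\text{RPL}} \varphi$; the converse direction is trivial by soundness.

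The main obstacle is the rationality claim itself. Everything else is a routine invocation of finite completeness once $r$ is known to be rational. The rationality rests on two facts that I would either cite from Hájek's monograph or sketch: (i) the McNaughton-style structure of RPL truth functions (which is where the book-keeping axioms play their essential role, reducing every expression involving $\overline{r}$ to a piecewise-linear form with rational data), and (ii) the elementary polyhedral fact that a rational piecewise-linear function on a compact rational polytope attains its extrema at rational vertices. I would expect to lean on the standard results about MV-algebras and McNaughton functions in \citep{CDM,H98} rather than reproving them, since they are the real engine behind Pavelka-style completeness for finite theories.
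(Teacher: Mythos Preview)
Your argument is correct and is essentially the standard one: the paper itself does not prove this proposition at all but simply cites H\'ajek's monograph \citep{H98}, where the result is established precisely via the McNaughton-function / rational-polytope reasoning you outline. So there is nothing to compare against in the paper beyond the citation, and your sketch matches the expected proof.
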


Finally, let us introduce the logic \FBR. To this end, we only have to expand the language of P-formulas with truth-constants $\overline r$, one for every rational $r \in [0, 1]$, and in the axiomatic definition of \FB\ we add the book-keeping axioms (BK) of RPL. The semantics remains basically the same as for \FB, given by belief functions on formulas $bel$, with the obvious further requirement that $\| \overline{r} \|_{bel} = r$ for each rational $r$ when evaluating compound P-formulas involving truth-constants. We will denote the notion of proof in \FBR\ by $\vdash_{FBR}$.

\begin{theorem} \label{theo3} {\bf (Belief function completeness of  \FBR)} 
Let $T$ be a finite modal theory over \FBR\ and $\cal A$ a P-formula of \FBR. Then $T\vdash_{FBR} A$ iff $\| {\cal A} \|_{bel} = 1$ for each
 belief function $bel$  model of $T$.
\end{theorem}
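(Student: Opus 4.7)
The plan is to mirror the proof of Theorem \ref{theo2}, with RPL playing the role of \L\ as the outer logic. Soundness is routine: the MEL axioms and (FP0)--(FP3) are valid in every belief function model by the argument of the previous theorem, and the book-keeping axioms (BK) are $1$-true because the semantics of \FBR\ stipulates $\| \overline r \|_{bel} = r$ for every rational $r \in [0, 1]$.

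For completeness, extend the syntactic translation $^\circ$ of step (S1) by declaring $(\overline r)^\circ := \overline r$, so that rational constants remain inert, while atomic P-formulas $P \Phi$ continue to be mapped to fresh propositional variables $p_\Phi$. Since the book-keeping axioms (BK) are already part of the axiomatic base of RPL, the same reasoning as in steps (S1)--(S2) of the proof of Theorem \ref{theo2} gives
\[
T \vdash_{FBR} \mathcal A \quad \mbox{iff} \quad T^\circ \cup {\bf FP}^\circ \vdash_{\rm RPL} \mathcal A^\circ,
\]
where ${\bf FP}^\circ$ is the (in principle infinite) RPL-theory obtained by translating all instances of (FP0)--(FP3). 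Exploiting that $\cal L$ is finitely generated, and hence ${\cal L}_\Box$ contains only finitely many formulas modulo MEL-equivalence, one picks representatives $\Phi^*$ and replaces ${\bf FP}^\circ$ by a finite sub-theory $({\bf FP}^\circ)^*$ inter-derivable with it, exactly as in Theorem \ref{theo2}. Together with the finiteness of $T$, this reduces the question to derivability from a finite theory in RPL.

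Now assume $T \not\vdash_{FBR} \mathcal A$, so $T^\circ \cup ({\bf FP}^\circ)^* \not\vdash_{\rm RPL} \mathcal A^\circ$. By Proposition \ref{rational-comp}, there exists an $[0,1]$-valued RPL-valuation $e$ with $e(T^\circ \cup ({\bf FP}^\circ)^*) \subseteq \{1\}$ and $e(\mathcal A^\circ) < 1$. Exactly as in the proof of Theorem \ref{theo2}, $\mu_e(\Phi) := e(p_\Phi)$ is a probability on MEL-formulas (since $e$ satisfies the translations of (FP0)--(FP3)), and then $bel_{\mu_e}(\varphi) := \mu_e(\Box \varphi)$ is a belief function on $\cal L$ by Proposition \ref{prop4.4}. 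The new check concerns truth-constants: because $e$ is an RPL-valuation, $e(\overline r) = r$, and since $^\circ$ fixes constants, $\| \mathcal A \|_{bel_{\mu_e}} = e(\mathcal A^\circ) < 1$. Hence $bel_{\mu_e}$ is a belief function model of $T$ falsifying $\mathcal A$, as required.

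The only real, and rather mild, obstacle is verifying that no additional axiom coordinating $P$ with the truth-constants is needed. This is painless because in \FBR\ truth-constants never occur inside the scope of $P$: their arithmetic is handled entirely by RPL through the book-keeping axioms, while their interaction with probabilistic atoms passes only through \L ukasiewicz connectives. Once this is noted, the argument is a direct transplantation of the completeness proof for \FB\ into the RPL setting, with Proposition \ref{rational-comp} replacing Theorem \ref{lukCompleteness} at the crucial completeness step.
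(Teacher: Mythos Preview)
Your proposal is correct and follows exactly the approach the paper implicitly intends: the paper states Theorem \ref{theo3} without proof, relying on the reader to see that the argument for Theorem \ref{theo2} carries over verbatim once \L\ is replaced by RPL and the finite completeness of \L\ (Theorem \ref{lukCompleteness}) by the finite completeness of RPL recorded just before Proposition \ref{rational-comp}. Your explicit handling of truth-constants under the translation $^\circ$ and the observation that no additional axioms coordinating $P$ with the constants are needed simply spell out details the paper leaves to the reader.
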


To  conclude this section, we can show that a sort of a graded modus ponens rule, valid in probabilistic logic and similar to the one of possibilistic logic, is also valid in FB(MEL, RPL) for formulas of the form $\overline r\to B\phi$ (encoding inequalities of the kind $bel(\phi) \geq r$), getting a sort of belief function counterpart of standard possibilistic logic  \citep{DLP94}. 

\begin{proposition} The following deduction holds: 
$$\{\overline r\to_L B\varphi, \overline s\to_L B(\varphi \to \psi)\}\vdash_{FBR}\overline{\max(r + s -1, 0)} \to_L B\psi .$$
\end{proposition}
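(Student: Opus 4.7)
The plan is to reduce the statement to two separable ingredients: an internal theorem of FB(MEL, RPL) expressing a graded ``belief modus ponens''
$$\vdash_{FBR}\; B\varphi \,\&\, B(\varphi\to\psi) \to_L B\psi, \qquad (\star)$$
and a routine manipulation of truth constants. Granted $(\star)$, the proposition follows essentially by the same argument as the well-known graded modus ponens of probabilistic and possibilistic logic: combine the two hypotheses with $\&$, rewrite $\overline r\,\&\,\overline s$ using book-keeping, and chain with $(\star)$.

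To establish $(\star)$, my preferred route is to use the completeness Theorem~\ref{theo3}. It suffices to check that the formula is $1$-true under every belief function on formulas $bel$. Because $bel$ is $\infty$-monotone it is in particular $2$-monotone, equivalently supermodular, so
$$bel(\varphi) + bel(\varphi\to\psi) \;\leq\; 1 + bel(\varphi \land (\varphi\to\psi)) \;=\; 1 + bel(\varphi\land\psi) \;\leq\; 1 + bel(\psi),$$
which is exactly $\|B\varphi \,\&\, B(\varphi\to\psi) \to_L B\psi\|_{bel}=1$. A purely syntactic derivation is also available: axiom K of MEL gives $\vdash_{\rm MEL}(\Box\varphi\land\Box(\varphi\to\psi))\to\Box\psi$, and then (FP0) together with (FP1) yields $\vdash_{FBR} P(\Box\varphi\land\Box(\varphi\to\psi))\to_L B\psi$; it remains to derive the general additivity schema $\vdash_{FBR} P\Phi\,\&\,P\Psi \to_L P(\Phi\land\Psi)$ from (FP3), which is exactly the content of the ``additive'' reformulation $P(\Phi\vee\Psi)\equiv P\Phi\oplus(P\Psi\ominus P(\Phi\land\Psi))$ displayed just after (FP3), combined with $P(\Phi\lor\Psi)\leq\overline 1$.

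Once $(\star)$ is in hand, the proof is finished in a few lines. From the two hypotheses $\overline r \to_L B\varphi$ and $\overline s \to_L B(\varphi\to\psi)$, the derivable \L ukasiewicz rule ``from $A\to_L C$ and $B\to_L D$ infer $A\,\&\,B \to_L C\,\&\,D$'' gives
$$\overline r \,\&\, \overline s \;\to_L\; B\varphi \,\&\, B(\varphi\to\psi).$$
The RPL book-keeping axiom (BK$_{\&}$) rewrites the antecedent as $\overline{\max(r+s-1,0)}$, and transitivity of $\to_L$ with $(\star)$ then produces the desired
$$\overline{\max(r+s-1,0)} \;\to_L\; B\psi.$$

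The only genuinely non-trivial step is the lemma $(\star)$; everything else is bookkeeping in \L/RPL. If one insists on a syntactic rather than semantic proof of $(\star)$, the main obstacle is packaging the additive flavour of (FP3) into the concrete inequality $P\Phi\,\&\,P\Psi \to_L P(\Phi\land\Psi)$, but appealing to Theorem~\ref{theo3} (belief-function completeness of \FBR) makes this immediate.
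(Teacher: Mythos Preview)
Your proof is correct and arrives at the same key lemma $(\star)$ as the paper, with the same closing steps (combine the hypotheses via $\&$, book-keep $\overline r\,\&\,\overline s$, chain with $(\star)$). The difference lies in how $(\star)$ is obtained. Your preferred route is semantic: invoke completeness (Theorem~\ref{theo3}) and check $(\star)$ directly against belief functions using $2$-monotonicity. The paper instead gives a short purely syntactic derivation that is simpler than your syntactic alternative: starting from axiom~K in its original form $\Box(\varphi\to\psi)\to(\Box\varphi\to\Box\psi)$, apply (FP0) and then (FP1) \emph{twice} to obtain $B(\varphi\to\psi)\to_L(B\varphi\to_L B\psi)$, whence $(\star)$ follows by residuation. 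This route never needs the additivity schema $P\Phi\,\&\,P\Psi\to_L P(\Phi\land\Psi)$ that your syntactic sketch relies on, nor does it appeal to completeness. So both arguments work; the paper's is more self-contained and avoids the detour through (FP3), while your semantic argument has the virtue of making transparent exactly which property of belief functions (supermodularity) is doing the work.
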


\begin{proof}
Since $\Box(\varphi \to \psi)  \to (\Box \varphi \to \Box \psi)$ is an axiom of MEL, by axiom (FP0), 
$$P(\Box(\varphi \to \psi)  \to (\Box \varphi \to \Box \psi)),$$
is a theorem of FB(MEL, RPL), and by axiom (FP1), FB(MEL, RPL) also proves: 
$$P\Box(\varphi \to \psi)  \to_L P(\Box \varphi \to \Box \psi) .$$
By using again axiom (FP1) on the right-hand side of the above implication, we can prove:
$$P\Box(\varphi \to \psi)  \to_L (P\Box \varphi \to_L P\Box \psi), 
$$
 that is in fact an analog of axiom (FP0) for B = P$\Box$, namely
 $$B(\varphi \to \psi)  \to_L (B \varphi \to_L B \psi).$$
Now, using the residuation law of \L,\footnote{Namely, the fact that $\chi\to_L(\varphi \to_L\psi)$ is equivalent to 
$\chi\&\varphi \to_L\psi$ in {\L}ukasiewicz logic.}  
we have that FB(MEL, RPL) proves the following theorem:
$$(B \varphi \& B(\varphi \to \psi) ) \to_L B \psi . $$
Hence, using this theorem, we can finally prove that the following chain of deductions hold in FB(MEL, RPL): 
$$\begin{array}{lll}
\overline{r} \to_L B \varphi, \overline{s} \to_L B(\varphi \to \psi) &\vdash_{FB} & (\overline{r} \& \overline{s}) \to_L (B \varphi \& B(\varphi \to \psi)) \\
& \vdash_{FB} & \overline{\max(r+s-1, 0)} \to  (B \varphi \& B(\varphi \to \psi)) \\
& \vdash_{FB} & \overline{\max(r+s-1, 0)} \to  B \psi . 
\end{array}$$
\end{proof}

Actually, since necessity measures are a particular kind of belief functions, we can recover deductions in possibilistic logic \citep{DLP94} inside \FBR\ once we extend this logic with the axiom $$ B(\varphi \land \psi) \equiv B\varphi  \land B\psi .$$
In such a case, the following characteristic inference rule of possibilistic logic is derivable: 
$$\{\overline r\to B\varphi, \overline s\to B(\varphi \to \psi)\}\vdash_{FBR}(\overline r \land \overline s)\to B\psi.$$
Note also that \FBR\  includes formulas of the form $\overline r\to \neg B\neg \varphi$, expressing that the plausibility of $\varphi$ is at least $r$, i.e.\ inequalities of the form $pl(\varphi)) \geq r$, where $pl$ is Shafer's plausibility function, the dual function of a belief function. Thus in the above axiomatic extension of \FBR\  it is also possible to capture the so-called {\em generalized possibilistic logic} from  \citep{DPS}.

However, in contrast with possibilistic logic, it is far from obvious that the belief function logic \FBR \ is powerful enough to derive optimal lower bounds $s$ in formulas $\overline s\to B\psi$ inferred from a weighted base of the form $\{\overline r_i\to B\varphi_i, i = 1,\dots,n\}$, using the proof system of the logic \FBR.


\section{Conclusions} \label{seconc}

We have revisited the belief function logic previously proposed by \citet{GHE03}, based on putting together two ideas:  belief functions on classical propositions can be understood as probabilities on S5 modal formulas, and reasoning about probability can be formalised as a sort of modal theory over {\L}ukasiewicz fuzzy logic. We have shown that this approach can be conceptually and technically simplified by replacing the full S5 language by its subjective fragment with a much simpler semantics based on epistemic states modeled by non-empty subsets of classical propositional interpretations. In short, in the approach of \citet{GHE03} we have replaced S5 by the minimal epistemic logic MEL (capturing Boolean possibility theory).
The various logic systems involved in this paper are described in Table \ref{summary}.

Several points remain to be explored. First,
the belief function logic \FB \ can be specialized so as to recover the {\L}ukasiewicz modal logic accounts of probability and possibility theories. Namely, 

\begin{itemize}

\item To recover the probability logic of \citep{HGE95,H98}, it is enough to add the axiom for the graded probability modality $B(\varphi \lor \psi) \equiv B\varphi \oplus (B\psi \ominus B(\varphi \land \psi))$ to \FB.

\item To recover a form of possibilistic logic where possibility and necessity are graded modalities \citep{HHEGG93,H98}, 
it is enough to add the axiom  of necessity measures 
$B(\varphi \land \psi) \equiv B\varphi  \land B\psi$ to \FB.
%

\end{itemize}


\begin{table}
\begin{center}
\begin{tabular}{|c|c|}
\hline {\bf Components} & {\bf Logic} \\
\hline \L + rationals & Rational Pavelka Logic RPL \citep{H98} \\
\hline CPL + weights + Necessity axioms & Possibilistic Logic of \citet{DLP94} \\
\hline CPL + \L + Probability axioms & Probability logic of \citep{HGE95,H98} \\
\hline CPL + \L + Necessity axioms & Possibilistic logic of \citep{HHEGG93,H98} \\
\hline S5 + \L + Probability axioms  & Belief function logic of \citet{GHE03} \\
\hline $\Box$CPL  + Necessity axioms & MEL \citep{BD13} \\
\hline $\Box$CPL  + weights + Necessity axioms & Generalized Possibilistic Logic \citep{DPS} \\
\hline MEL+ \L + Probability axioms & \FB\ (this paper) \\
\hline MEL + RPL + Probability axioms & \FBR\ (this paper) \\
\hline \end{tabular}
\end{center}
\caption{Links between logical systems in the paper.}
\label{summary}
\end{table}

The belief function logic \FB\ could be extended to a richer underlying language, allowing objective formulas at the inner level. In this spirit, the MEL logic was extended to MEL$^+$  \citep{BDG14,BDGP18}, allowing for the combination of propositional and modal formulas with a semantics slightly modified. Namely formulas in MEL$^+$ are evaluated by pairs $(w, E)$, where the interpretation $w$ (representing the actual world) evaluates objective subformulas,
and the epistemic part $E$ evaluates modal subformulas. It is possible to add a probability logic on top of MEL$^+$. However this would mean that the semantics should be based on probability distributions on $W\times 2^W$, whose practical interpretation is a bit hard to figure out. On the other hand, the natural probabilistic counterpart of pairs $(w, E)$ would be pairs $(p, Bel)$ where $p$ is, say, a frequentist probability function, and $Bel$ a subjective belief function. But it is not clear then how to evaluate the truth of a proposition $P(\Phi)$, where $\Phi$ is a MEL$^+$ formula, using pairs $(p, Bel)$.

Finally, any belief function specialist will notice that a major component of evidence theory  is missing in the \FB\ and \FBR\  logics: Dempster rule of combination. The belief function logic in  \citep{ChatalicDP87} relies on the latter, viewing any logical statement as a basic probability assignment, and combining them using Dempster rule. This combination rule is also captured by the logic in \citep{SossaiBC01}. But none of these two approaches is based on a modal logic. Accounting for Dempster rule in the style of the \FBR\ logic would mean going further from {\L}ukasiewicz or Rational Pavelka logics, so as to encompass a conjunction connective representing the product of rational numbers, as preliminarily done on a particular basis in  \citep{GHE01,GHE03} on top of the whole S5 logic. This is yet another non-trivial open problem.

\blue{
\subsection{Acknowledgments} The authors are grateful to anonymous reviewers for their remarks and suggestions. Godo acknowledges partial support by the MOSAIC project (EU H2020- MSCA-RISE-2020 Project 101007627) and  Spanish project  PID2019-111544GB-C21/AEI/10.13039/501100011033. }


\begin{thebibliography}{10}

\bibitem[Banerjee \& Dubois(2009)]{BD09} M. Banerjee, D. Dubois. (2009). A simple modal logic for reasoning about revealed beliefs. In C. Sossai and G. Chemelo (Eds.), \emph{Proc. 10th Europ. Conf. on Symbolic and Quantitative Approaches to Reasoning with Uncertainty (ECSQARU'09)} (pp. 805-816). LNAI 5590, Springer.

\bibitem[Banerjee \& Dubois(2014)]{BD13}  M. Banerjee, D. Dubois. (2014). A simple logic for reasoning about incomplete knowledge
{\em Int. J. Approximate Reasoning}, 55(2), 639-653. 

\bibitem[Banerjee et al.(2014)]{BDG14} M. Banerjee, D. Dubois, L. Godo. (2104). Possibilistic vs. relational semantics for logics of incomplete information. (2014). In A. Laurent et al. (Eds.), Proc. of 15th International Conference on Information Processing and Management of Uncertainty in Knowledge-Based Systems (IPMU'14), Part I: Vol. 442. CCIS (pp. 335-344). Montpellier, France, Springer.

\bibitem[Banerjee et al.(2017)]{BDGP18} M. Banerjee, D. Dubois, L. Godo, H. Prade. (2017). On the relation between possibilistic logic and modal logics of belief and knowledge. {\em J. Appl. Non Class. Logics} 27(3-4), 206-224. 

\bibitem[Besnard \& Kohlas(1995)]{BesKoh} Ph. Besnard, J. Kohlas. (1995). Evidence theory based on general consequence relations. {\em Int. J. Found. Comput. Sci.} 6(2), 119-135.

\bibitem[Chang(1958)]{Chang:MVAlgebras} C.C. Chang. (1958).
  Algebraic analysis of many-valued logics. {\em Transactions of the American Mathematical Society}, 88:456--490.  


\bibitem[Chang(1959)]{Chang2} 
C.C. Chang. (1959). 
{A new proof of the completeness of the {\L}ukasiewicz axioms}. {\em Transactions of the American Mathematical Society} 93: 74--90.


\bibitem[Chatalic(1987)]{ChatalicDP87} 
Ph. Chatalic, D. Dubois, H. Prade. (1987). An approach to approximate reasoning based on Dempster rule of combination.  {\em Inter. J. of Expert Systems, Research \& Applications}, 1, 67-85.

\bibitem[Cholvy(2015)]{Ch16} L. Cholvy. (2015).
Towards a logical belief function theory.  \emph{J. Appl. Log.}  \emph{13}(4): 441--457.

\bibitem[Cignoli et al.(2000)]{CDM}
R. Cignoli, I.M.L. D'Ottaviano, and D. Mundici. (2000). {\em Algebraic Foundations of Many-valued Reasoning}.
Kluwer, Dordrecht.

\bibitem[Cintula et al.(2009)]{apal} P. Cintula, F. Esteva, J. Gispert, L. Godo, F. Montagna, C. Noguera. (2009). 
Distinguished algebraic semantics for t-norm based fuzzy logics: Methods and algebraic equivalencies. {\em Ann. Pure Appl. Log.} 160(1): 53-81.

\bibitem[Cintula et al.(2016)]{handbook3}
P. Cintula, C. Ferm\"uller, C. Noguera (eds.). (2016). 
  {\em Handbook of Mathematical Fuzzy Logic} -- volume 3,
  Studies in Logic, Mathematical Logic and Foundations, vol.~58, College Publications, London.

\bibitem[Cintula et al.(2011)]{handbook1}
P. Cintula, P. H\'ajek, C. Noguera (eds.). (2011). 
  {\em Handbook of Mathematical Fuzzy Logic} -- volume 1,
  Studies in Logic, Mathematical Logic and Foundations, vol.~37, College Publications, London.

\bibitem[Cintula et al.(11b)]{handbook2}
P. Cintula, P. H\'ajek, C. Noguera (eds.). (2011). 
  {\em Handbook of Mathematical Fuzzy Logic} -- volume 2,
  Studies in Logic, Mathematical Logic and Foundations, vol.~38, College Publications, London.
  


\bibitem[Cintula \& Noguera(2014)]{CN14} P. Cintula and C. Noguera. (2014). Modal Logics of Uncertainty with Two-Layer Syntax: A General Completeness Theorem. In U. Kohlenbach et al.\ (eds.) {\em Logic, Language, Information, and Computation} (WoLLIC 2014). Lecture Notes in Computer Science, vol 8652, Springer, pp. 124-136. 

\bibitem[de Finetti(1937)]{DeF23} B. de Finetti. (1937). La pr\'evision: ses lois logiques, ses sources subjectives. {\em Ann. Inst. Poincar\'e}, 7:1-68.

\bibitem[Dempster(1967)]{Dempster} A.P. Dempster. (1967). Upper and lower probabilities induced by a multivalued mapping. {\em Annals of Mathematical Statistics}, 38: 325-339. 

\bibitem[Dubois et al.(1994)]{DLP94}
D. Dubois, J. Lang, H. Prade. Possibilistic logic. (1994). In: D. M. Gabbay, C. J. Hogger, J. A. Robinson, D. Nute (Eds.) {\it Handbook of Logic in Artificial Intelligence and Logic Programming, Vol. 3} (pp. 439-513) Oxford University Press. 


\bibitem[Dubois \& Prade(1988)]{DP88a} D. Dubois, H. Prade. (1988). {\em Possibility Theory: An Approach to Computerized Processing of Uncertainty}, Plenum Press. 

\bibitem[Dubois \& Prade(2014)]{DP14}
D. Dubois, H. Prade. Possibilistic logic - An overview. (2014). In: D. Gabbay, J. Siekmann, J. Woods (Eds.) \emph{Handbook of the History of Logic, vol. 9: Computational Logic}, pp. 283-342, Elsevier.

\bibitem[Dubois et al.(2017)]{DPS}
D. Dubois, H. Prade, S. Schockaert. (2017). Generalized possibilistic logic: Foundations and applications to qualitative reasoning about uncertainty. {\em Artif. Intell.} 252: 139-174.

\bibitem[Flaminio(2008)]{F08}
T. Flaminio. Strong non-standard completeness for fuzzy logics. (2008). {\em Soft Computing}, 12: 321--333.

\bibitem[Flaminio \& Godo(2007)]{FG07}
T. Flaminio, L. Godo. (2007). A logic for reasoning about the probability of fuzzy events. {\em Fuzzy Sets and  Systems} 158(6): 625--638.

\bibitem[Flaminio et al.(2011)]{FGM11}
T. Flaminio, L. Godo, E. Marchioni. (2011). Reasoning about Uncertainty of Fuzzy Events: an Overview. In {\em Understanding Vagueness - Logical, Philosophical, and Linguistic Perspectives}, P. Cintula et al. (Eds.), College Publications: 367--400.


\bibitem[G\"odel(1932)]{Godel:ZumAussagen}
K. G{\"{o}}del.  Zum intuitionistischen {A}ussagenkalk{\"{u}}l.(1932).   {\em Anzieger Akademie der Wissenschaften Wien}, 69:65--66.  

\bibitem[Godo et al.(2001)]{GHE01} L. Godo, P. H\'ajek, F. Esteva. (2001). A fuzzy modal logic for belief functions. Proc of IJCAI 2001, pp. 723-732. 

\bibitem[Godo et al.(2003)]{GHE03} L. Godo, P. H\'ajek, F. Esteva. (2003). A fuzzy modal logic for belief functions. {\em Fundamenta Informaticae} 57(2-4): 127-146.

\bibitem[H\'ajek(1998)]{H98}
P. H\'ajek. {\em Metamathematics of Fuzzy Logic}. (1998). Kluwer Academy Publishers.


\bibitem[H\'ajek et al.(1995)]{HGE95}
P. H\'ajek, L. Godo, F. Esteva. (1995). Fuzzy logic and probability. Proc. of the 11th. Conference on {\em Uncertainty in Artificial Intelligence} (UAI'95), 237--244.

\bibitem[H\'ajek et al.(1994)]{HHEGG93}
P. H\`ajek, D. Harmancov\`a, F. Esteva, P. Garcia, L. Godo. (1994). 
On modal logics for qualitative possibility in a fuzzy setting. Proc. of the 10th. Conference on {\em Uncertainty in Artificial Intelligence} (UAI'94), 278-285.

\bibitem[Halpern(2003)]{Halpern03} J.Y. Halpern. {\em Reasoning about Uncertainty}. The MIT Press, 2003.

\bibitem[Harmanec and H\'ajek(1994)]{Harmanec1} D. Harmanec, P. H\'ajek. A qualitative belief logic. (1994). {\em Int. Journ. Uncertainty, Fuzziness and Knowledge-based Systems}, 2, 227-236.

\bibitem[Harmanec et al.(1994)]{Harmanec2} D. Harmanec, G. Klir, G. Resconi. (1994). On modal logic interpretation of Demspter-Shafer theory of evidence. {\em Int. J. of Intelligent Systems} 9, 941-951. 

\bibitem[Hintikka(1962)]{Hintikka} J. Hintikka. (1962). {\em Knowledge and Belief. An Introduction to the logic of the two notions}. Cornell University Press. 

\bibitem[{\L}ukasiewicz(1930)]{L30}
J. {\L}ukasiewicz. (1930). Philosophical remarks on many-valued systems of propositional logic, 1930. Reprinted in Selected Works (Borkowski, ed.), {\em Studies in Logic and the Foundations of Mathematics}, North-Holland, Amsterdam: 153-179.  

\bibitem[Montagna(2006)]{Montagna}
F.~Montagna. (2006). Notes on strong completeness in {{\L}}ukasiewicz, Product and {BL} logics and in their first-order extensions.
In {\em Algebraic and Proof-theoretic Aspects of Non-classical Logics}, S. Aguzzoli et al. (Eds.), LNAI 4460, Springer, pages 247--274.


\bibitem[Mundici(2011)]{MundiciAdvanced}
D. Mundici. (2011). {\em Advanced {\L}ukasiewicz calculus and MV-algebras}. Springer, Dordrecht.

\bibitem[Pearl(1988)]{P88} 
J. Pearl. (1988). {\em Probabilistic Reasoning in Intelligent Systems: Networks of Plausible Inference}. Morgan Kaufmann Publishers.
	
\bibitem[Pearl(1990)]{Pearl} J. Pearl. (1990). Reasoning with belief functions: An analysis of compatibility. {\em Int.  J. of Approximate Reasoning} 4: 363-389.  

\bibitem[Provan(1989)]{Provan89a} G. M. Provan. (1989). 
An analysis of ATMS-Based techniques for computing Dempster-Shafer belief functions. Proc. 11th Int. Joint Conf. on Artificial Intelligence (IJCAI'89), Detroit, Aug. 20-25, (N. S. Sridharan, ed.) Morgan Kaufmann, 1115-1120.

\bibitem[Provan(89b)]{Provan89b}  G. M. Provan. (1989b). 
The application of Dempster Shafer theory to a logic-based visual recognition system. UAI 1989: 389-406.

\bibitem[Provan(1990)]{Provan90}  G. M. Provan.  (1990). 
A logic-based analysis of Dempster-Shafer theory. {\em  Int. J. of Approximate Reasoning} 4(5-6), 451-495.


\bibitem[Rose \& Rosser(1958)]{Rose-Rosser} 
A. Rose, J. B. Rosser. (1958). Fragments of many-valued statement calculi. {\em Transactions of the American Mathematical Society} 87: 1--53.

\bibitem[Ruspini(1986)]{Ruspini1} E. H. Ruspini. (1986). The logical foundations of evidential reasoning. Tech. Note 408, SRI International, Menlo Park.

\bibitem[Ruspini(1987)]{Ruspini2} E. H. Ruspini. (1987). Epistemic logics, probability and the calculus of evidence. Proc. 10th Int. Joint
Conf. on Artificial Intelligence (IJCAI'87), Milan, Aug. 23-28, 924-931.

\bibitem[Saffiotti(1992)]{Saffiotti92} A. Saffiotti. (1992). A belief function logic. 
Proc. 10th National Conf. on Artificial Intelligence, (William R. Swartout, ed.), San Jose,  July 12-16,  AAAI Press / The MIT Press,  642-647,

\bibitem[Shafer(1975)]{Shafer} G. Shafer. (1976). {\em A mathematical theory of evidence}. Princeton Univ. Press.

\bibitem[Smets(1988)]{Smets-1} Ph. Smets. (1988). Belief functions. In Ph. Smets, A. Mamdani, D. Dubois, and H. Prade (eds.),
{\em Non standard logics for automated reasoning}, pp. 253-286. Academic Press, London.


\bibitem[Smets(1991)]{Smets-2} Ph. Smets. (1991). Probability of provability and belief functions. {\em Logique et Analyse}, 34 (133/134), 177-195. 

\bibitem[Sossai(1999)]{Sossai99} C. Sossai. (1999). Belief-functions logic. Technical Report 01/99, Ladseb-CNR, Padova, Italy, March 1999.

\bibitem[Sossai(2000)]{Sossai00} C. Sossai. (2000). Events and meta-events. Proc. 8th Int. Conf. on Information Processing and Management of Uncertainty in Knowledge-based Systems (IPMU'00), Madrid, 3-7 Jul. 2000. 

\bibitem[Sossai et al.(2001)]{SossaiBC01}
C. Sossai, P. Bison, G. Chemello. (2001). Fusion of symbolic knowledge and uncertain information in robotics.
{\em Int. J. of Intelligent Systems} 16(11), 1299-1320.


\bibitem[Sossai et al.(1999)]{SossaiBCT99}
C. Sossai, P. Bison, G. Chemello, G. Trainito. (1999). Merging probability and possibility for robot localization. Proc.  Workshop on Reasoning with Uncertainty in Robot Navigation (RUR'99), Stockholm, August 1999.

\bibitem[van der Hoek(1992)]{vanderHoek92}
W. van der Hoek. (1992). 
On the semantics of graded modalities. {\em J. Appl. Non Class. Logics} 2(1), 81-123.
        
\bibitem[Walley(1991)]{Walley} P. Walley. (1991). {\em Statistical reasoning with imprecise probabilities}. Chapman and Hall, London.

\bibitem[Wong et al.(1991)]{Wong91}  S. K. M. Wong, Y. Y. Yao, P. Bollmann, and H. C. B\"urger. (1991). Axiomatization of Qualitative Belief Structure. {\em IEEE Transactions on Systems, Man and Cybernetics},  21(4), 726-734.

\bibitem[Zadeh(1978)]{Zad78a}
L. A. Zadeh. (1978). Fuzzy sets as a basis for a theory of possibility. {\em Fuzzy Sets and Systems} 1, 3-28, 1978. 


\end{thebibliography}
\end{document}